\theoremstyle{remark}
\def\th@remark{%
  \normalfont 
  \thm@headfont{\bfseries\itshape}
}
\theoremstyle{definition}
\newtheorem{assumption}{Assumption}
\theoremstyle{definition}
\theoremstyle{plain} 
\newtheorem{proposition}{Proposition}
\begin{document}

\title{EquiForm: Noise-Robust $\mathrm{SE}(3)$-Equivariant Policy Learning from 3D Point Clouds}

\author{Zhiyuan Zhang, Yu She$^{\dagger}$
\thanks{$^{\dagger}$ Corresponding Author.}
\thanks{Zhiyuan Zhang and Yu She are with the School of Industrial Engineering,
Purdue University, West Lafayette, USA  
{\tt\footnotesize zhan5570@purdue.edu; shey@purdue.edu}}%
}

\maketitle

\begin{abstract}
Visual imitation learning with 3D point clouds has advanced robotic manipulation by providing geometry-aware, appearance-invariant observations. 
However, point cloud–based policies remain highly sensitive to sensor noise, pose perturbations, and occlusion-induced artifacts, which distort geometric structure and break the equivariance assumptions required for robust generalization.
Existing equivariant approaches primarily encode symmetry constraints into neural architectures, but do not explicitly correct noise-induced geometric deviations or enforce equivariant consistency in learned representations.
We introduce EquiForm, a noise-robust $\mathrm{SE}(3)$-equivariant policy learning framework for point cloud–based manipulation.
EquiForm formalizes how noise-induced geometric distortions lead to equivariance deviations in observation-to-action mappings, and introduces a geometric denoising module to restore consistent 3D structure under noisy or incomplete observations.
In addition, we propose a contrastive equivariant alignment objective that enforces representation consistency under both rigid transformations and noise perturbations.
Built upon these components, EquiForm forms a flexible policy learning pipeline that integrates noise-robust geometric reasoning with modern generative models. We evaluate EquiForm on 16 simulated tasks and 4 real-world manipulation tasks across diverse objects and scene layouts. Compared to state-of-the-art point cloud imitation learning methods, EquiForm achieves an average improvement of 17.2\% in simulation and 28.1\% in real-world experiments, demonstrating strong noise robustness and spatial generalization.
For more details, please refer to the project website: \url{https://zhangzhiyuanzhang.github.io/equiform-website/}.
\end{abstract}

\begin{IEEEkeywords}
Noise robustness, equivariance, imitation learning, point cloud, robotic manipulation.
\end{IEEEkeywords}

\section{Introduction}
Imitation learning has made remarkable progress in robotic manipulation, enabling robots to acquire complex behaviors directly from human demonstrations \cite{diffusion_policy,zhao2023learning,DP3,o2024open}. Despite this progress, generalization and sample efficiency remain key challenges. Visual imitation learning policies \cite{diffusion_policy,zhao2023learning} often overfit to the limited distribution of training demonstrations and struggle to generalize to unseen variations in objects, scene layouts, or camera viewpoints.
Small distribution shifts such as changes in object poses, lighting, or backgrounds can lead to significant performance degradation.

To address the limitations of image-based policies, recent works have explored the use of 3D point clouds as policy inputs \cite{DP3,wang2024gendp,huang20243d,zhang2025canonical}.
Point clouds provide a geometry-aware, appearance-invariant representation that is more robust to visual distractors such as color, texture, and background clutter.
This geometric abstraction enables policies to better capture the relationship between environment structure and robot actions, improving generalization to diverse object shapes and spatial configurations\cite{DP3}. In real-world settings, such point cloud representations arise naturally in robotic manipulation tasks, as illustrated in Fig.~\ref{fig:system_overview}.

\begin{figure}[t]
    \centering
    \includegraphics[width=1.0\linewidth]{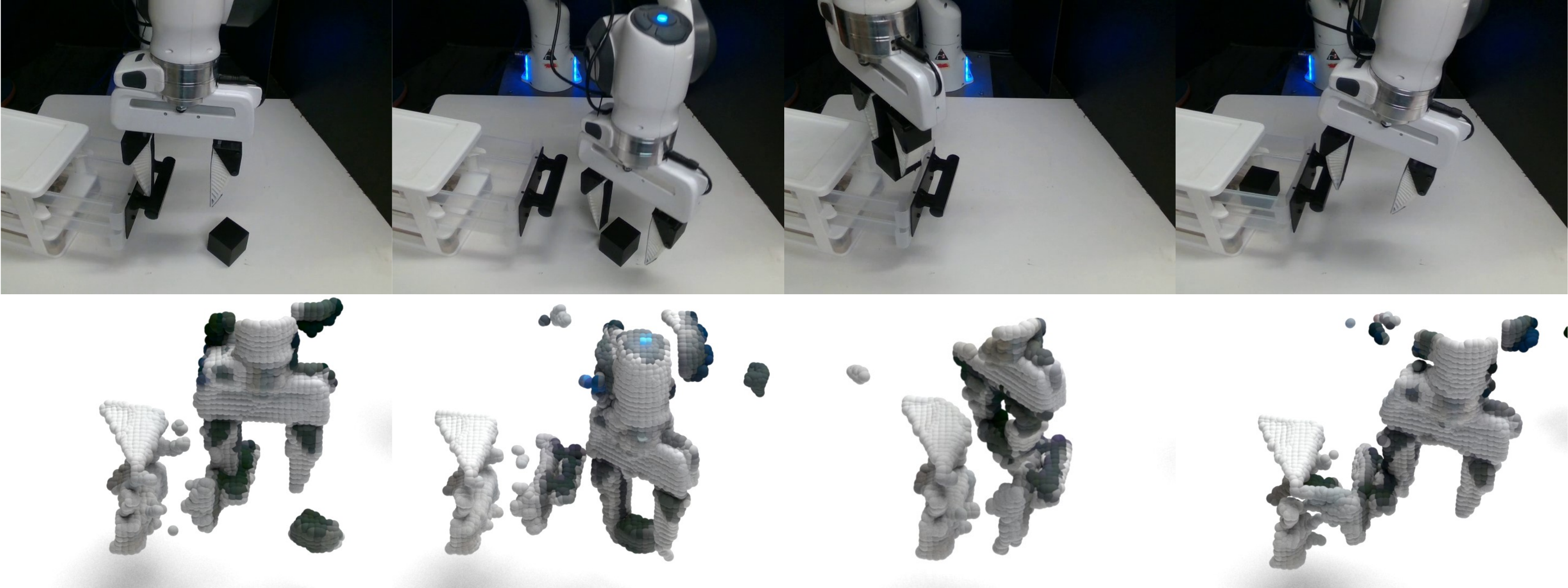}
    \caption{Real-world robotic manipulation with point cloud observations. Image observations are shown in the top row, and the corresponding point cloud observations are shown in the bottom row.}
    \label{fig:system_overview}
\end{figure}
While point clouds improve robustness to visual appearance, they remain
challenging to generalize across diverse scene layouts~\cite{zhang2025canonical}.
Such variations in object orientations and relative poses require the
policy to act consistently under different geometric transformations.
This motivates the use of $\mathrm{SE}(3)$ equivariance, which captures the rotational and translational symmetries inherent in many manipulation tasks. By ensuring that the policy responds predictably to rigid transformations of the input, $\mathrm{SE}(3)$-equivariant architectures can significantly enhance sample efficiency and spatial generalization in 3D manipulation.\cite{equidiff,actionflow,EquiAct,Equibot,zhang2025canonical}.

Current equivariant policy learning approaches typically assume clean and stable geometric observations.
In practice, point clouds often contain depth noise, occlusions, missing surface regions, and inconsistencies introduced by sparse, irregular, or discretized point sampling~\cite{qi2017pointnet, pointnet++, xie2020pointcontrast}.
Such noise introduces geometric distortions that disrupt the correspondence between rigid transformations in input space and the induced transformations in feature space, leading to equivariance deviations that degrade policy performance.
These failure modes are illustrated in Fig.~\ref{fig:abstract}, where non-equivariant policies suffer from misalignment, and noise-sensitive equivariant policies exhibit unstable behaviors under noisy point cloud observations, despite performing well under idealized conditions.
However, existing methods do not model how noise affects $\mathrm{SE}(3)$ symmetry or provide mechanisms to preserve equivariance under unreliable geometry.

To address these challenges, we propose EquiForm, a noise-robust $\mathrm{SE}(3)$-equivariant policy learning framework designed to operate reliably on noisy point cloud observations.
Our contributions are summarized as follows:

1. We provide a formal analysis of how noise-induced geometric distortions affect point cloud representations and break $\mathrm{SE}(3)$ equivariance in observation-to-action mappings. Building on this understanding, we introduce a geometric denoising module that restores stable 3D structure under noisy observations, providing a reliable foundation for downstream equivariant reasoning.

\begin{figure}[t]
    \centering
    \includegraphics[width=1.0\linewidth]{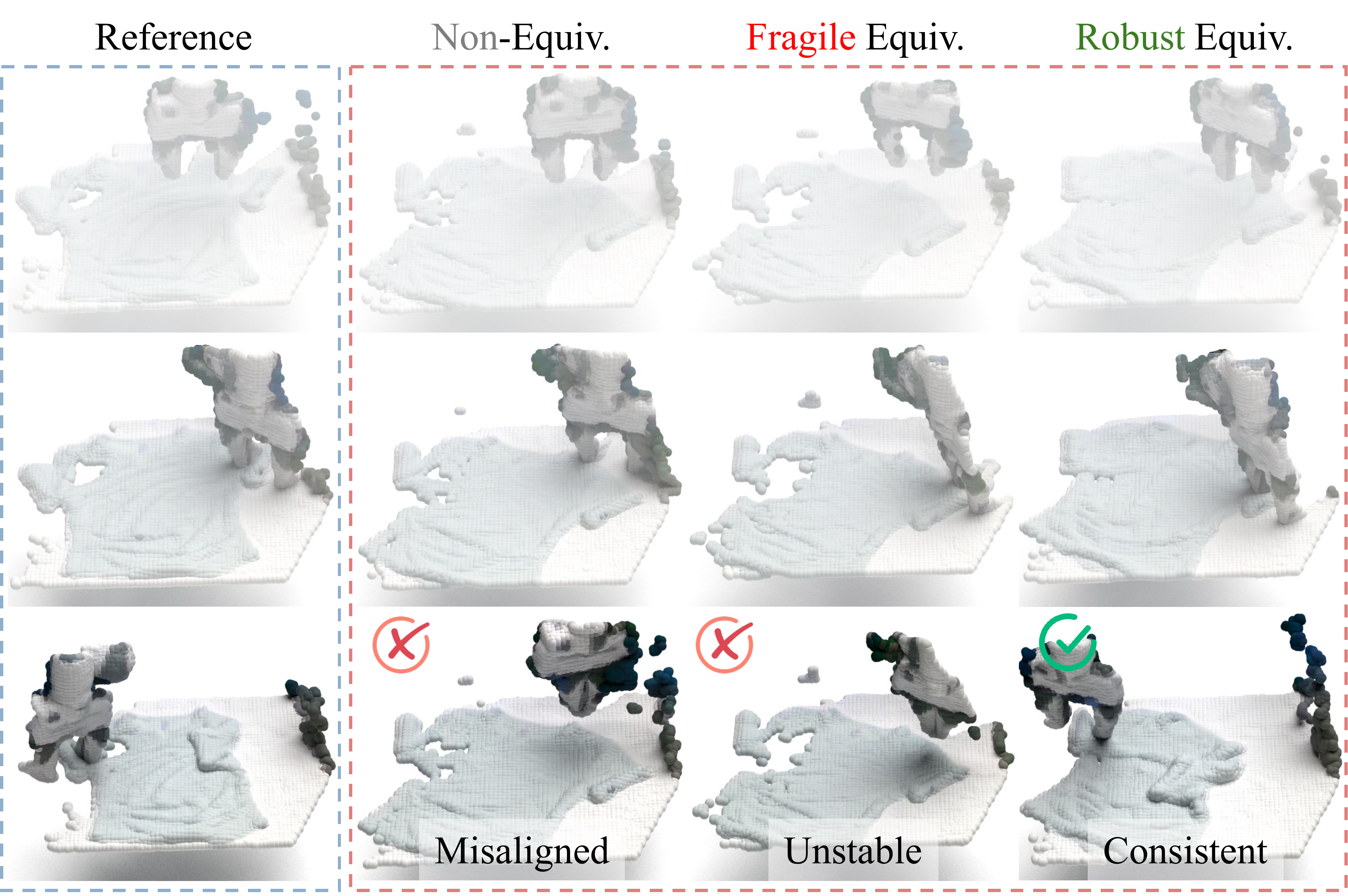}
    \caption{Comparison of non-equivariant, fragile equivariant, and robust equivariant policies under noisy point cloud observations. Robust equivariance produces consistent behaviors across task progressions, while non-equivariant or fragile equivariant policies suffer from misalignment or instability.}
    \label{fig:abstract}
\end{figure}
2. We propose a contrastive equivariant alignment loss that encourages consistent 3D representations under both rigid transformations and noise perturbations. This objective explicitly promotes noise-robust equivariance, enabling the learned representation to maintain symmetry.

3. We evaluate EquiForm in both simulation and real-world settings.
Across 16 simulated tasks and 4 real-world manipulation tasks, spanning varying noise levels and diverse object instances and scene layouts, EquiForm achieves strong performance relative to existing point cloud-based imitation learning methods, demonstrating clear improvements in noise robustness and spatial generalization.

\section{Related Works}

\subsection{Point Cloud Policy Learning}
Imitation learning for robotic manipulation has evolved from low-dimensional state representations to high-dimensional visual observations, enabling the execution of complex behaviors in unstructured environments \cite{diffusion_policy,zhao2023learning,chi2024universal}. While 2D image-based policies have demonstrated strong performance, their generalization is often constrained by sensitivity to illumination changes, texture variations, and camera viewpoint shifts. 

To mitigate these limitations, recent work has increasingly turned to 3D point cloud representations \cite{DP3,iDP3,wang2024gendp,huang20243d,xue2025demogen,zhang2025canonical}. By explicitly encoding spatial geometry and abstracting away appearance-related distractors, point clouds allow policies to ground decision-making in the structural relationship between the robot and its environment. Leading approaches in this domain, such as DP3 \cite{DP3} and its extensions \cite{iDP3,wang2024gendp,xue2025demogen,chisari2024learning}, integrate point cloud encoders with expressive generative backbones, including diffusion models \cite{diffusion_policy} and flow matching \cite{flow_matching}, to capture multi-modal action distributions.

However, these approaches typically assume high-fidelity geometric observations without considering potential distortions.
In practice, point clouds are frequently sparse, non-uniform, and corrupted by depth noise, occlusions, and sampling artifacts \cite{huang20243d}, causing test-time geometry to deviate from the idealized training distribution.
Policies trained on near-pristine data often suffer marked performance degradation under such conditions.
Our work builds upon these 3D policy foundations but explicitly incorporates a geometric denoising mechanism to improve robustness under sensing imperfections.

\subsection{Equivariance in Robot Learning}
\label{sec:related_equiv}
Robotic manipulation is intrinsically grounded in 3D Euclidean space, where many tasks satisfy $\mathrm{SE}(3)$ symmetries: rigid transformations of the scene should induce corresponding transformations of the robot's action.
Exploiting this structure, $\mathrm{SE}(3)$-equivariant policy learning has emerged as a powerful paradigm for enhancing sample efficiency and spatial generalization \cite{simeonov2023se,wang22onrobot,huang2023leveraging,equidiff,actionflow,EquiAct,Equibot,gao2024riemann,zhang2025canonical}.

Existing methods fall broadly into three categories. The first class enforces equivariance through the network architecture itself\cite{Equibot,EquiAct,simeonov2023se,gao2024riemann,equidiff}, using modules such as Vector Neurons\cite{VN}, Tensor Field Networks\cite{thomas2018tensor}, or Steerable Equivariant CNNs\cite{cesa2022program}. These approaches guarantee symmetry by construction but often require specialized layers and are tightly coupled to specific model designs.
The second class achieves equivariance implicitly by operating in a relative or invariant space, as exemplified by ActionFlow\cite{actionflow} and Invariant Point Attention\cite{ipa}. These methods encode features using invariant interactions in local reference frames and then reconstruct global actions through local-to-global update rules, yielding equivariant behavior without explicit frame estimation.
The third class derives equivariance through canonicalization \cite{zhang2025canonical,ma2024canonicalization,kaba2023equivariance}, which maps each observation to a standardized pose before policy inference. By explicitly estimating a canonical $\mathrm{SE}(3)$ frame and transforming both observations and actions into this shared space, canonicalization offers strong interpretability, ensures coherent multi-modal alignment, and integrates seamlessly with model-agnostic generative policy heads.

While each paradigm offers distinct advantages, most equivariant formulations implicitly assume stable and accurate geometry. In practice, depth noise, partial observations, and irregular sampling introduce geometric distortions that prevent observations from strictly following ideal rigid-body transformations. This leads to equivariance deviation, where input perturbations break the theoretical transformation rules of the equivariant architecture and result in inconsistent policy behavior. Unlike prior works that enforce equivariance under idealized conditions, EquiForm combines explicit canonicalization with noise-aware modeling, aiming to preserve $\mathrm{SE}(3)$-consistent control even when the underlying geometric structure is perturbed.

\subsection{Noise-Robust Learning for 3D Perception}
Robustness to sensor noise and sampling irregularities is a key requirement for deploying 3D perception systems in realistic environments. In the broader 3D vision community, point cloud denoising has been approached through geometric filtering, statistical outlier removal, and learning-based reconstructions that recover clean surfaces or robust features \cite{ren2021overall,gao2022reflective,wang2023fcnet,zhang2023particle,pointnet++,qi2017pointnet}. 

In robotic manipulation, however, noise handling is often treated as a decoupled pre-processing stage or is implicitly left to the robustness of generic encoders \cite{DP3,iDP3,Equibot,zhang2025canonical}.
Such strategies are limited in practice: filtering can discard subtle geometric cues that are crucial for manipulation, and unconstrained encoders lack mechanisms to preserve the $\mathrm{SE}(3)$ structure necessary for reliable generalization under noise.

In contrast, EquiForm introduces a unified framework that jointly addresses noise robustness and equivariant representation learning. Rather than treating denoising as a separate preprocessing step or auxiliary objective, our approach integrates a geometric denoising module with a contrastive equivariant alignment loss. This combination shapes a latent space that is resistant to local sensor artifacts while remaining equivariant to global rigid motions. By simultaneously accounting for noise and symmetry, EquiForm encourages the policy to rely on stable and task-relevant geometric structure, improving reliability under realistic, noisy operating conditions.

\section{Preliminaries}
This section establishes the theoretical foundations of EquiForm.
We first formalize equivariance in the context of robotic policy learning.
We then review $\mathrm{SE}(3)$ canonicalization, which transforms observations into a standardized reference frame to enable spatial generalization.
Finally, we analyze how sensor noise disrupts geometric symmetries and leads to equivariance deviation, motivating the noise-robust mechanisms proposed in our method.

\subsection{Equivariance in Policy Learning}
\begin{figure}[t]
    \centering
    \includegraphics[width=0.9\linewidth]{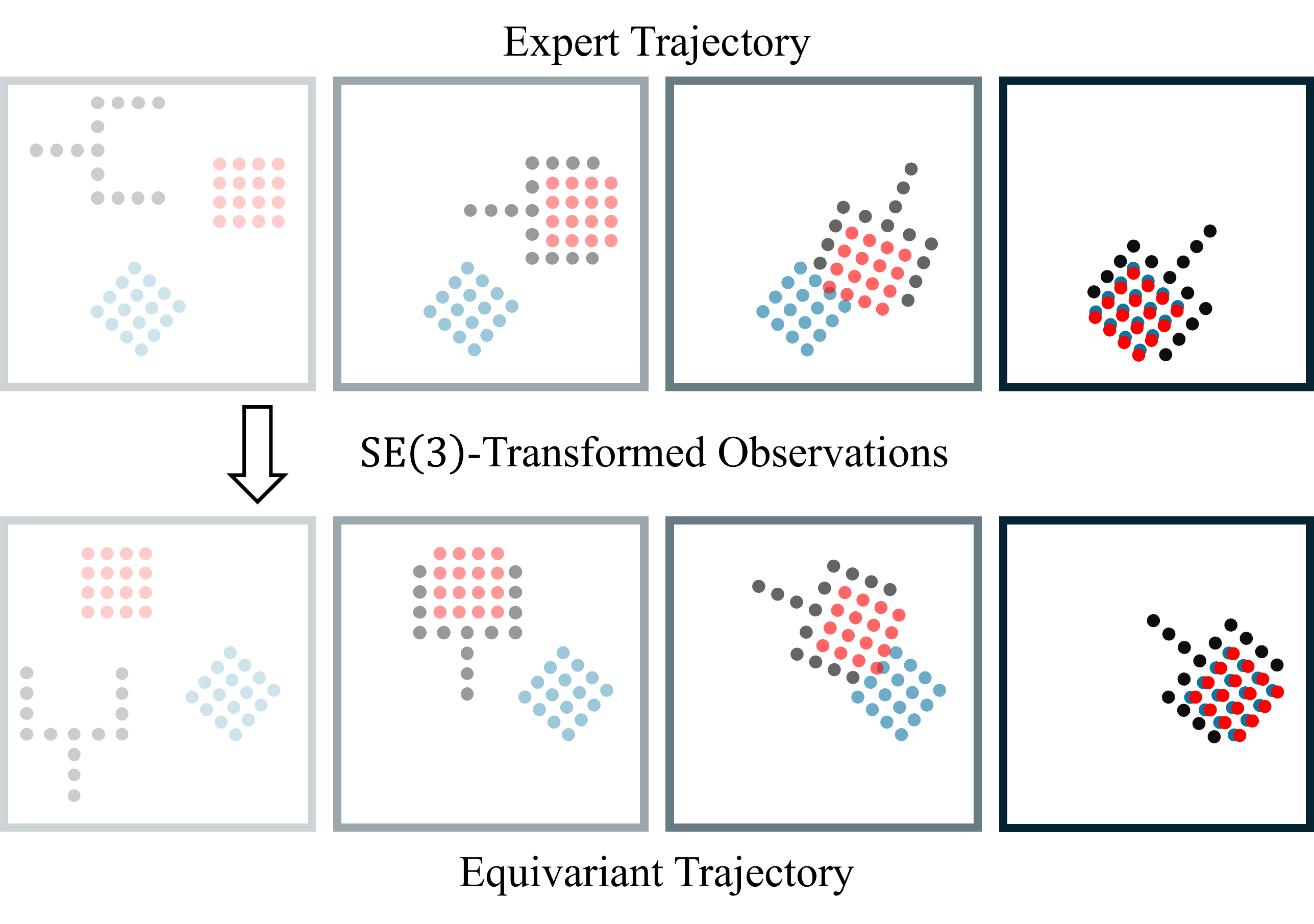}
    \caption{A rigid transformation of the scene observation induces a corresponding transformation of the expert action.}
    \label{fig:equi_traj}
\end{figure}
We consider the problem of learning a robotic control policy $\pi$ via behavior cloning. The objective is to learn a function that maps a history of observations $\mathcal{O} = \{O_{t-m+1}, \ldots, O_t\}$ to a sequence of future actions $\mathcal{A} = \{\mathbf{a}_t, \ldots, \mathbf{a}_{t+n-1}\}$:
\begin{equation*}
\pi: \mathcal{O} \rightarrow \mathcal{A},
\end{equation*}
such that the robot imitates the behavior demonstrated by an expert.
Each observation may include both visual inputs (e.g. images, voxels, or point clouds) and proprioceptive information (e.g. end-effector pose or joint angles).
Since robotic manipulation takes place in 3D space and the resulting scene interactions are governed by rigid-body transformations, it is natural to represent visual input using point clouds and to express proprioception and actions in end-effector pose space.
In the following, we denote each observation as $O=\{X, \mathbf{s}\}$, 
where $X \in \mathbb{R}^{3 \times N}$ is a point cloud composed of $N$ points, and $\mathbf{s}$ represents the robot's end-effector pose (e.g., position, orientation, and gripper width). The corresponding action $\mathbf{a}$ is expressed in the same pose space as an end-effector command.

In 3D manipulation tasks, interactions are typically invariant to the choice of reference frame. Consequently, the optimal policy is expected to satisfy an equivariance property: if the scene undergoes a transformation, the corresponding action should transform in a consistent manner.
Formally, let $g$ denote a general transformation, and let $g\triangleright_\mathcal{O} \mathcal{O}$ and $g\triangleright_\mathcal{A} \mathcal{A}$ denote its action on the observation and action spaces, respectively.
An equivariant policy satisfies:
\begin{equation*}
    \pi(g\triangleright_\mathcal{O} \mathcal{O})=g\triangleright_\mathcal{A} \pi(\mathcal{O}).
\end{equation*}
While the policy is conditioned on a temporal sequence of observations, the equivariance constraint is intrinsic to the spatial structure of individual observations. Therefore, for notational clarity, we formulate this property on a single time step.
In robotic manipulation, a rigid transformation applied to the scene observation necessitates a corresponding transformation of the action to maintain physical consistency, as illustrated in Fig.~\ref{fig:equi_traj}.
This motivates the following assumption:
\begin{assumption}
\label{asp:T}
Expert demonstrations are high-quality and consistent with the underlying task symmetry.
Specifically, if an observation undergoes a rigid transformation $T\in\mathrm{SE}(3)$, the corresponding expert action transforms accordingly:
\begin{equation*}
    \pi^{\star}(TO)=T \pi^{\star}(O),
\end{equation*}
where $\pi^\star$ denotes the idealized expert policy.
\end{assumption}

This assumption reflects the fact that many manipulation tasks are defined up to rigid transformations, such that the desired action should transform equivariantly with the observation\cite{EquiAct, Equibot, zhang2025canonical,equidiff}. Under Assumption~\ref{asp:T}, we conceptually do not distinguish between the transformation acting on the observation and the one acting on the action space.
Furthermore, since future action decisions are conditioned on the current point cloud observation, it is natural to extract transformation-related information directly from the observation itself, as described in the following subsection.

\subsection{SE(3) Canonicalization Policy Learning}
\label{sec:can_policy_learning}
As introduced in Section~\ref{sec:related_equiv}, $\mathrm{SE}(3)$ canonicalization aims to explicitly estimate the rigid transformation that maps a given observation to a canonical frame. Point clouds that differ only by a rigid motion yield identical representations, enabling the policy to learn consistently without being affected by pose variation~\cite{zhang2025canonical}. We now provide the mathematical formulation of this process.

\begin{figure}[t]
    \centering
    \includegraphics[width=0.9\linewidth]{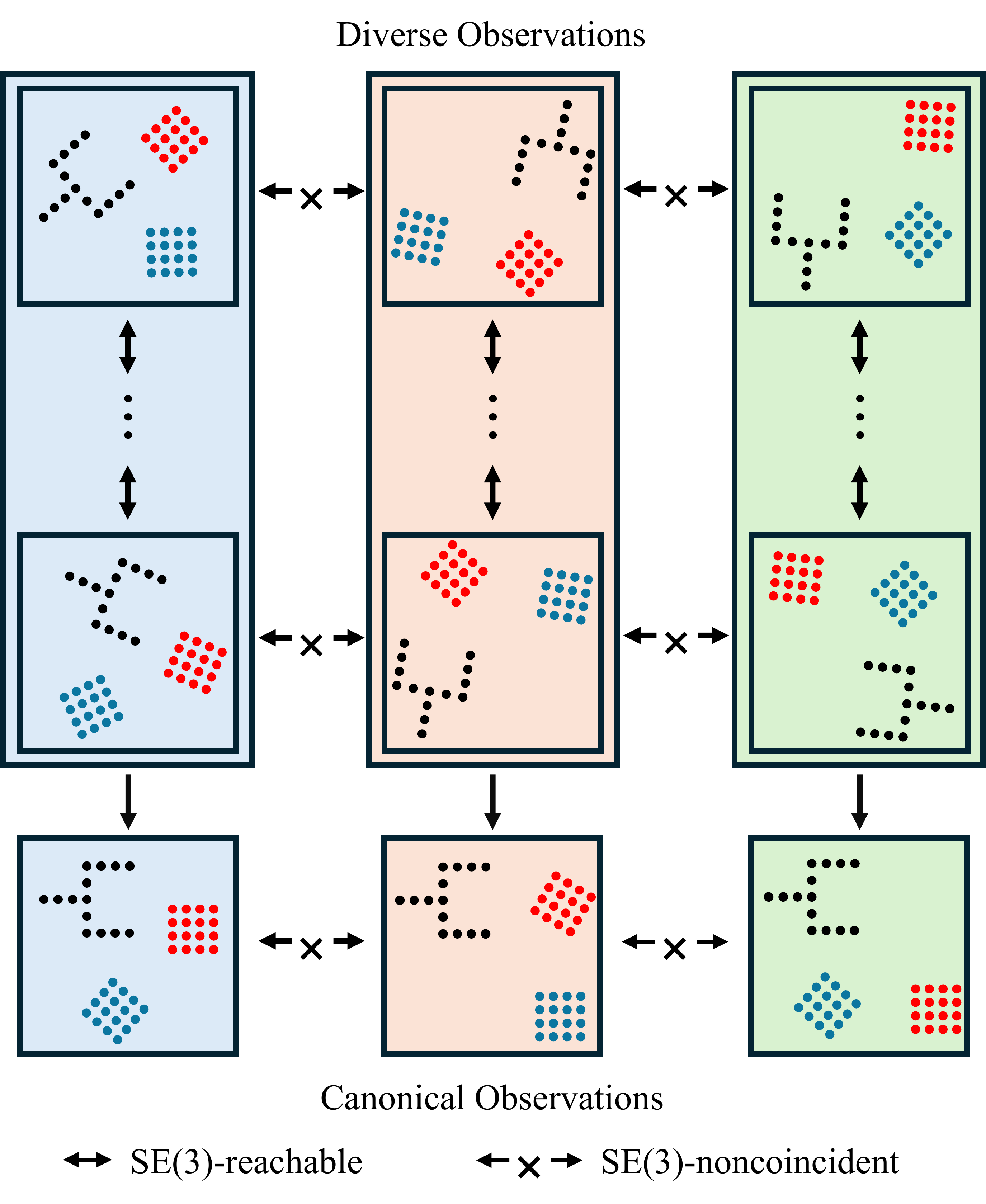}
    \caption{Illustration of the canonicalization process. Within each column, observations differ only by $\mathrm{SE}(3)$ transformations and thus share the same canonical form. Across columns, $\mathrm{SE}(3)$-noncoincident observations cannot be aligned and retain distinct canonical representations.}
    \label{fig:can_obs}
\end{figure}
Let $\Psi$ denote an $\mathrm{SE}(3)$-equivariant network taking point clouds as input. The equivariance property requires that, for any rigid motion $H\in\mathrm{SE}(3)$,
\begin{equation}
\label{eq:equiv_psi}
    \Psi(H X)=H\Psi(X).
\end{equation}
We constrain $\Psi(X)$ to output a rigid transformation $T=\{R,\mathbf{b}\}\in\mathrm{SE}(3)$, where the translation $\mathbf{b}\in\mathbb{R}^{3\times1}$ is computed from the point cloud centroid, and the rotation $R\in\mathrm{SO}(3)$ is constructed by orthonormalizing two $\mathrm{SO}(3)$-equivariant vectors produced by the network. The implementation details follow the Canonical Policy~\cite{zhang2025canonical}.
\begin{proposition}
\label{pro:1}
Let $X$ and $Y$ be two point clouds related by a rigid transformation:
\begin{equation*}
    \exists T_i\in\mathrm{SE}(3), \quad Y=T_i X.
\end{equation*}
We define the canonicalized representations of $X$ and $Y$ as
\begin{equation*}
    \hat{X}=\Psi(X)^{-1}X, \quad \hat{Y}=\Psi(Y)^{-1}Y
\end{equation*}
Then $\hat{X} = \hat{Y}$.
\end{proposition}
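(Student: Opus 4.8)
The plan is to substitute the relation $Y = T_i X$ directly into the definition of $\hat{Y}$ and then invoke the equivariance property \eqref{eq:equiv_psi} of $\Psi$. Concretely, I would first write $\hat{Y} = \Psi(Y)^{-1} Y = \Psi(T_i X)^{-1} (T_i X)$. The key step is to apply \eqref{eq:equiv_psi} with $H = T_i$, which gives $\Psi(T_i X) = T_i \Psi(X)$ as an identity in $\mathrm{SE}(3)$. Taking inverses in the group yields $\Psi(T_i X)^{-1} = \Psi(X)^{-1} T_i^{-1}$, so that $\hat{Y} = \Psi(X)^{-1} T_i^{-1} T_i X = \Psi(X)^{-1} X = \hat{X}$, which is exactly the claim.

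The only point that needs care is the interpretation of the juxtaposition symbols. In $HX$ the element $H \in \mathrm{SE}(3)$ acts rigidly on the point cloud (applied pointwise, e.g. in homogeneous coordinates), whereas in $H\Psi(X)$ appearing in \eqref{eq:equiv_psi} the juxtaposition denotes composition in $\mathrm{SE}(3)$, since $\Psi(X)$ is itself a rigid transformation. I would therefore state explicitly that the action of $\mathrm{SE}(3)$ on point clouds is a left action compatible with group composition, i.e. $(H_1 H_2)X = H_1(H_2 X)$ and $H^{-1}(HX) = X$; these two facts are precisely what is used to cancel $T_i^{-1} T_i$ and to regroup $\Psi(T_i X)^{-1}(T_i X)$. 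With this bookkeeping in place the derivation collapses to a one-line computation.

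I do not expect any genuine obstacle: the statement is essentially the defining property of canonicalization and follows immediately from equivariance of $\Psi$ together with associativity and invertibility of the group action. Two minor points are worth a remark in the write-up. First, the argument uses nothing about the specific construction of $\Psi$ (centroid-based translation, rotation from orthonormalized equivariant vectors); it rests solely on \eqref{eq:equiv_psi}. Second, the map $X \mapsto \Psi(X)^{-1} X$ is well defined because $\Psi$ is constrained to output elements of $\mathrm{SE}(3)$, so $\Psi(X)$ is always invertible. Conceptually, the proposition says that $X \mapsto \hat{X}$ is constant on each $\mathrm{SE}(3)$-orbit, i.e. it factors through the quotient $\mathbb{R}^{3\times N}/\mathrm{SE}(3)$, which is the property needed for pose-invariant policy learning downstream.
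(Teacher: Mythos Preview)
Your proposal is correct and follows essentially the same route as the paper: substitute $Y=T_iX$, apply the equivariance relation \eqref{eq:equiv_psi} to get $\Psi(T_iX)=T_i\Psi(X)$, invert, and cancel $T_i^{-1}T_i$. The only cosmetic difference is that the paper introduces the shorthand $T=\Psi(X)$ before carrying out the cancellation, whereas you work directly with $\Psi(X)$; your additional remarks on the left-action bookkeeping and the invertibility of $\Psi(X)$ are sound and make the argument slightly more explicit than the paper's version.
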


\begin{proof}
Let $\Psi(X)=T$, and the canonicalized form of $X$ is given by:
\begin{equation}
\label{eq:can_X}
    \hat{X}=\Psi(X)^{-1}X=T^{-1}X.
\end{equation}
For $Y=T_iX$, the canonicalized form is:
\begin{equation}
\label{eq:can_Y}
    \hat{Y}=\Psi(Y)^{-1}Y=\Psi(T_iX)^{-1}T_iX.
\end{equation}
Using the equivariance property from Equation~\eqref{eq:equiv_psi}, we have:
\begin{equation*}
    \Psi(T_iX)=T_i\Psi(X)=T_iT,
\end{equation*}
and therefore
\begin{equation*}
    \Psi(T_iX)^{-1}=T^{-1}T_i^{-1}.
\end{equation*}
Substituting this back into Equation~\eqref{eq:can_Y}:
\begin{equation*}
    \hat{Y}=\Psi(T_iX)^{-1}T_iX=T^{-1}T_i^{-1}T_iX=T^{-1}X=\hat{X}. 
\end{equation*}
Hence \( \hat{X} = \hat{Y} \), establishing the claim.
\end{proof}

This process is illustrated in Fig.~\ref{fig:can_obs}. The proposition establishes that any two point clouds related by an $\mathrm{SE}(3)$ transformation map to the same canonical representation.
Consequently, by aligning the robot state and action into this canonical frame, we enable pose-invariant imitation learning.

Based on Proposition~\ref{pro:1}, the inference process of the policy can be formulated as:
\begin{equation}
\label{eq:ideal}
    \pi(O)=T\pi(T^{-1}O)=T\pi(\hat{O}),
\end{equation}
where, $T=\Psi(X)$ and $\hat{O}=T^{-1}O$ is the canonicalized observation.
Specifically, the policy $\pi$ predicts a canonical action based on $\hat{O}$, which is subsequently transformed back to the global frame via $T$. This effectively eliminates observation pose variations, allowing the policy to focus on task-relevant structures and generalizing to unseen poses that are rigid transformations of the training data.

\subsection{Noise-Induced Equivariance Deviation}
\begin{figure}[t]
    \centering
    \includegraphics[width=1.0\linewidth]{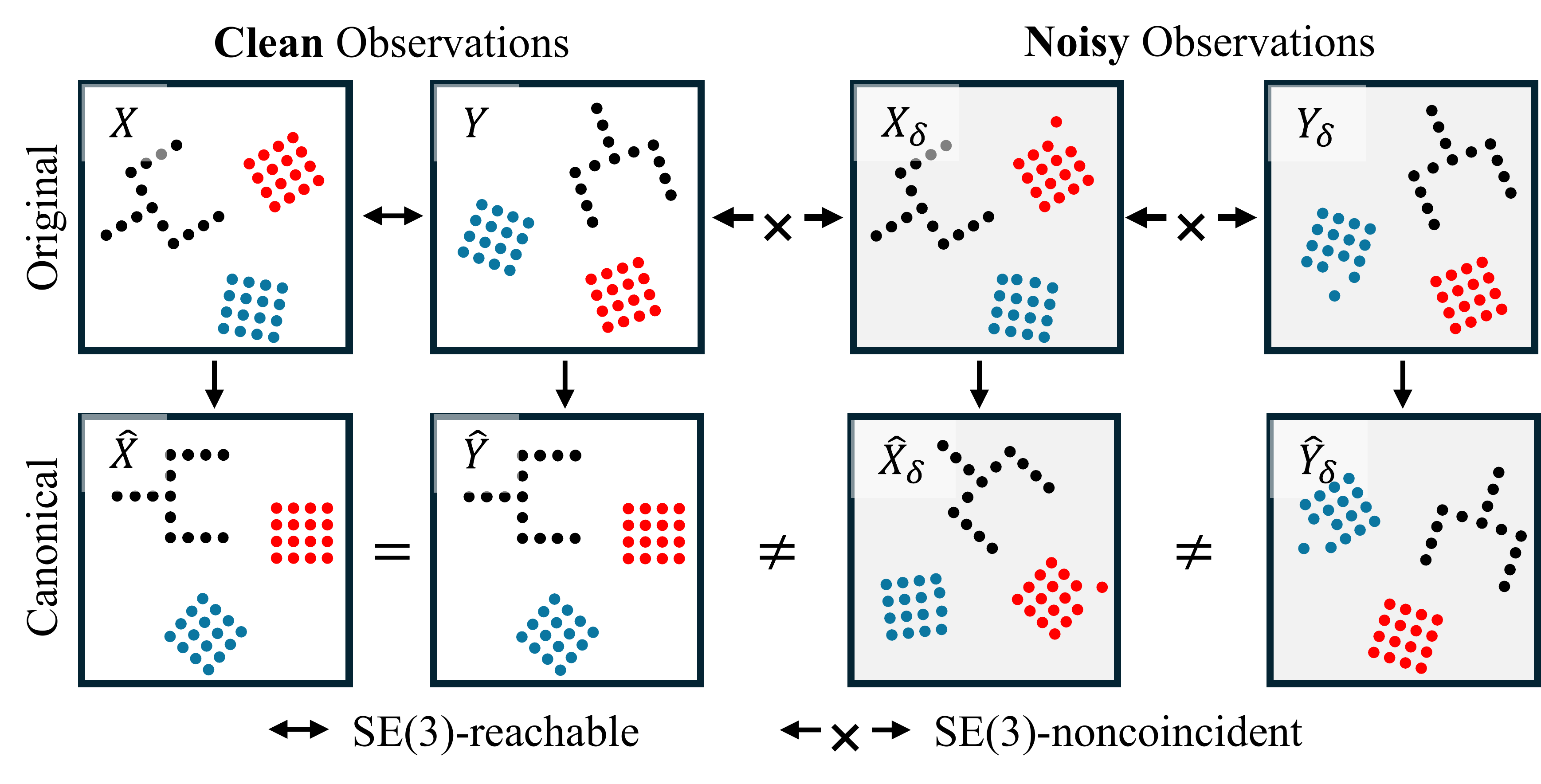}
    \caption{Clean observations related by an $\mathrm{SE}(3)$ transformation canonicalize to the same frame, whereas noise corrupts this relationship: noisy observations become $\mathrm{SE}(3)$-noncoincident and yield inconsistent canonical forms.}
    \label{fig:clean_noise}
\end{figure}
\label{sec:noise_breakdown}
The theoretical guarantees of Proposition~\ref{pro:1} hinge on the idealized assumption that the underlying geometry is consistent, i.e., that $Y$ is an exact rigid transformation of $X$. However, in practice, point cloud observations are inevitably corrupted by sensor noise, partial occlusions, and sampling inconsistencies. As shown in Fig.~\ref{fig:clean_noise}, even minor deviations in a single point can prevent perfect $\mathrm{SE}(3)$ alignment between two nominally identical observations. Consequently, we must analyze the behavior of the canonicalization under noisy inputs.

Let $X_{\delta} = \delta(X)$ denote a noisy point cloud observation, where $\delta$ is a stochastic noise function. Our analysis relies on the following assumption:
\begin{assumption}
We assume that noise primarily corrupts the visual observation $X$, while the robot's proprioceptive state $\mathbf{s}$ remains accurate.
\end{assumption}
Thus, the complete noisy observation is $O_{\delta}=\{X_{\delta}, \mathbf{s}\}$.
To reason about the effect of noise on canonicalization, we first note the following property of the noise function:
\begin{proposition}
\label{pro:2}
The stochastic noise function $\delta$ does not possess the $\mathrm{SE}(3)$-equivariant property. That is, for any rigid transformation $H\in\mathrm{SE}(3)$, the following relationship does not hold:
\begin{equation}
\label{eq:neq}
    \delta(HX)\neq H\delta(X)
\end{equation}
\end{proposition}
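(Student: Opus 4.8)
The plan is to first make the statement precise and then expose the obstruction by writing the sensor noise as an explicit additive perturbation drawn in the fixed sensor frame. Concretely, model $\delta(X) = X + E$, where $E = [\mathbf{e}_1,\dots,\mathbf{e}_N]\in\mathbb{R}^{3\times N}$ is a random array (per-point depth jitter, optionally with an outlier or masking component) whose law is dictated by the sensing process and is independent of the object pose. Writing a rigid motion as $H=\{R,\mathbf{b}\}$ with $HX = RX + \mathbf{b}\mathbf{1}_N^\top$, the two candidate compositions expand to
\begin{align*}
H\,\delta(X) &= R(X+E) + \mathbf{b}\mathbf{1}_N^\top = HX + RE, \\
\delta(HX) &= HX + E',
\end{align*}
where $E'$ is the noise realization produced when the sensor observes the already-transformed scene.

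The second step is to argue that these two right-hand sides cannot coincide, and I would present the strongest reading. (i) At the realization level, $\delta$ is stochastic, so the draw $E'$ attached to the input $HX$ is an independent re-sampling; hence $\Pr[E' = RE] = 0$ for any continuous noise law, giving $\delta(HX)\neq H\delta(X)$ almost surely. (ii) Even if one forced the two evaluations to share their underlying randomness, so that $E' = E$, the equivariance identity would demand $RE = E$, which fails for every rotation $R\neq I$ and generic $E$, precisely because the perturbation is anchored to the sensor frame and does not co-rotate with the geometry. Either way the functional identity $\delta(HX)=H\delta(X)$ is impossible, which is the claim; a minimal instantiation (say $N=1$ with $\mathbf{e}_1$ a fixed nonzero vector not preserved by $R$) already supplies an explicit counterexample.

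The main obstacle is not the algebra but choosing the right meaning of ``$\neq$''. For a perfectly \emph{isotropic} additive model, the pushforward distributions of $\delta(HX)$ and $H\delta(X)$ actually agree, so the proposition is false if read distributionally. I would therefore state at the outset that $\delta$ is treated as an honest stochastic map (equivalently, realized through an explicit auxiliary randomness variable), under which equivariance is the deterministic identity above and the argument is clean. If a distribution-level statement is wanted instead, I would specialize to a pose-dependent noise model --- e.g.\ depth error aligned with the camera viewing ray, or viewpoint-dependent occlusion masks --- so that $\mathrm{law}(\delta(HX))\neq H_\ast\,\mathrm{law}(\delta(X))$, because the anisotropy is fixed to the sensor rather than to the rotated object. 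This framing sentence is the part that genuinely needs to be written down; everything after it follows immediately from the $\mathrm{SE}(3)$ action.
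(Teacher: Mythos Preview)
Your approach is essentially the same as the paper's: model $\delta$ as additive per-point noise, expand both compositions, and observe that the two noise realizations (your $E'$ versus $RE$, the paper's $\epsilon_1$ versus $H\epsilon_2$) are independent draws that generically cannot coincide. Your treatment is a bit more careful---you correctly apply only the rotation $R$ to the displacement $E$ rather than the full rigid motion, and you explicitly flag the realization-versus-distribution subtlety that the paper leaves implicit---but the underlying argument is identical.
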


\begin{proof}
The function $\delta$ models sensor noise or sampling inconsistency and is typically modeled as an independent stochastic perturbation applied to each point $\mathbf{x} \in X$, such as additive Gaussian noise $\delta(\mathbf{x})=\mathbf{x}+\epsilon$. We examine the commutative relationship between the rigid transformation $H$ and the noise function $\delta$.
If the transformation is applied first, the perturbed point becomes
\begin{equation*}
    \delta(H\mathbf{x})=H\mathbf{x}+\epsilon_1.
\end{equation*}
If noise is applied first, followed by the transformation, we obtain
\begin{equation*}
    H\delta(\mathbf{x})=H(\mathbf{x}+\epsilon_2)=H\mathbf{x}+H\epsilon_2.
\end{equation*}
Since $\epsilon_1$ and $\epsilon_2$ are independently sampled and $H$ is a non-identity rigid motion, in general $H\epsilon_2 \neq \epsilon_1$. Thus, the perturbed points no longer satisfy the rigid-body relationship required for $\mathrm{SE}(3)$ equivariance. Extending to the full point cloud, we obtain $\delta(HX)\neq H\delta(X)$.
\end{proof}

From Proposition~\ref{pro:2}, noise destroys the $\mathrm{SE}(3)$ structure required for equivariance, leading to
\begin{equation}
\label{eq:Psi_noise_break}
\Psi(\delta(HX)) \neq H\Psi(\delta(X)),
\end{equation}
in direct contrast to the ideal equivariance condition in Equation~\eqref{eq:equiv_psi}.
This mismatch directly disrupts canonicalization, preventing observations that differ only by a rigid transformation in the noise-free setting from aligning to a common canonical form under noise. We formalize this breakdown of invariance in the following proposition.

\begin{proposition}
\label{pro:noise_can}
Let $X$ and $Y$ be two noise-free point clouds related by a rigid transformation, i.e., $\exists T_i\in\mathrm{SE}(3)$ such that $Y=T_i X$.
Let the corresponding noisy point clouds be $X_\delta=\delta(X)$ and $Y_\delta=\delta(Y)$. Then, their canonicalized representations are no longer equivalent:
\[
    \hat{X}_\delta \neq \hat{Y}_\delta.
\]
\end{proposition}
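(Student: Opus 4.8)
The plan is to argue by contradiction and reduce the claimed inequality to the impossibility of aligning the two \emph{noisy} point clouds by a single rigid motion. Expanding the definitions, $\hat{X}_\delta = \Psi(X_\delta)^{-1}X_\delta$ and $\hat{Y}_\delta = \Psi(Y_\delta)^{-1}Y_\delta$. Suppose $\hat{X}_\delta = \hat{Y}_\delta$; multiplying on the left by $\Psi(Y_\delta)$ gives $Y_\delta = S X_\delta$ with $S := \Psi(Y_\delta)\,\Psi(X_\delta)^{-1}$, which lies in $\mathrm{SE}(3)$ since $\Psi$ is constrained to output rigid transformations and $\mathrm{SE}(3)$ is closed under products and inverses. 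Hence equality of the two canonical forms would require $X_\delta$ and $Y_\delta$ to differ by a rigid transformation --- exactly the structural property that let the cancellation in the proof of Proposition~\ref{pro:1} go through, and precisely the property destroyed by noise as recorded in Equation~\eqref{eq:Psi_noise_break}.

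It then remains to show this requirement fails. Because $Y = T_i X$, we have $Y_\delta = \delta(T_i X)$ and $X_\delta = \delta(X)$, so the assumption demands $\delta(T_i X) = S\,\delta(X)$ for some $S \in \mathrm{SE}(3)$. Rather than quantifying over $S$ directly, I would invoke the intrinsic invariant that rigid motions preserve all pairwise distances: if $Y_\delta = S X_\delta$, then for every pair of indices $j,k$ the $j$-th and $k$-th points of the clouds, written $\mathbf{y}^{\delta}_j,\mathbf{x}^{\delta}_j$, must satisfy $\| \mathbf{y}^{\delta}_j - \mathbf{y}^{\delta}_k\| = \| \mathbf{x}^{\delta}_j - \mathbf{x}^{\delta}_k\|$. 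Writing $\delta(\mathbf{x}) = \mathbf{x} + \epsilon$ and $T_i = \{R_i,\mathbf{b}_i\}$, the left-hand difference equals $R_i(\mathbf{x}_j-\mathbf{x}_k) + (\epsilon_1^{(j)}-\epsilon_1^{(k)})$ and the right-hand one equals $(\mathbf{x}_j-\mathbf{x}_k) + (\epsilon_2^{(j)}-\epsilon_2^{(k)})$; squaring both norms and cancelling the common term $\| R_i(\mathbf{x}_j-\mathbf{x}_k)\|^2 = \|\mathbf{x}_j-\mathbf{x}_k\|^2$ collapses the identity to a single scalar equation relating the independently sampled noise vectors, which holds only on a measure-zero set of realizations. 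Since one violated pair already precludes \emph{every} candidate $S$, generically $\delta(T_i X)$ is not a rigid transform of $\delta(X)$, contradicting $\hat{X}_\delta = \hat{Y}_\delta$; therefore $\hat{X}_\delta \neq \hat{Y}_\delta$.

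The main obstacle I anticipate is precisely this upgrade: Proposition~\ref{pro:2} only rules out the single transformation $S = T_i$, whereas here we must exclude \emph{all} of $\mathrm{SE}(3)$. The pairwise-distance invariant is what makes the argument clean, since congruence is intrinsic and never references $\Psi$ or any particular $S$; the only remaining care is checking that the resulting scalar constraint on the noise is non-degenerate under a continuous noise model (e.g.\ additive Gaussian with distinct points $\mathbf{x}_j \neq \mathbf{x}_k$), so that it indeed occurs with probability zero. Consistent with the level of rigor in the preceding analysis, the conclusion is most honestly read as holding for generic $X$ and generic noise realizations rather than unconditionally.
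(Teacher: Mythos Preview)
Your proposal is correct and in fact more careful than the paper's own argument, but it takes a different route. The paper proceeds directly: it writes out $\hat{Y}_\delta = \Psi(\delta(T_iX))^{-1}\delta(T_iX)$, invokes Proposition~\ref{pro:2} to assert that $\delta(T_iX)\neq T_iX_\delta$, concludes that the equivariance identity $\Psi(\delta(T_iX))=T_i\Psi(X_\delta)$ fails, and then substitutes to obtain a chain of inequalities ending at $\hat{Y}_\delta\neq\hat{X}_\delta$. In other words, the paper simply observes that the cancellation in Proposition~\ref{pro:1} no longer goes through and treats that as sufficient.

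Your contradiction argument is sharper: by showing that $\hat{X}_\delta=\hat{Y}_\delta$ would force $Y_\delta=SX_\delta$ for \emph{some} $S\in\mathrm{SE}(3)$, and then ruling out every such $S$ via the intrinsic pairwise-distance invariant, you close the logical gap that the paper's direct substitution leaves open---namely, that failure of the specific identity $\Psi(\delta(T_iX))=T_i\Psi(X_\delta)$ does not by itself preclude the two canonical forms from coinciding for other reasons. You also correctly flag that Proposition~\ref{pro:2} only excludes the particular transformation $T_i$, whereas your argument needs (and supplies) an exclusion over all of $\mathrm{SE}(3)$. The paper's approach is shorter and matches its overall heuristic tone; yours is more rigorous and makes explicit the ``generic noise realization'' caveat that both proofs ultimately rely on.
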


\begin{proof}
Their noisy canonical forms are
\begin{align*}
     \hat{X}_\delta&=\Psi(X_\delta)^{-1}X_\delta,\\
    \hat{Y}_\delta&=\Psi(Y_\delta)^{-1}Y_\delta
                =\Psi(\delta(T_i X))^{-1}\delta(T_i X).
\end{align*}
From Proposition~\ref{pro:2}, $\delta(T_iX)\neq T_i\delta(X)=T_iX_\delta$, meaning $X_\delta$ and $Y_\delta$ are no longer related by any rigid motion. Since the equivariance property of $\Psi$ applies only to rigidly related inputs, we must have
\[
    \Psi(\delta(T_iX))\neq T_i\,\Psi(X_\delta).
\]
Substituting into the canonicalization expressions yields
\begin{align*}
    \hat{Y}_\delta     &= \Psi(\delta(T_i X))^{-1}\delta(T_i X) \\     &\neq (T_i \Psi(X_\delta))^{-1} (T_i X_\delta) \\     &= \Psi(X_\delta)^{-1} X_\delta \\     &= \hat{X}_\delta.
\end{align*}
Therefore, the canonicalized representations differ, i.e., \( \hat{X}_\delta \neq \hat{Y}_\delta \), which completes the proof.
\end{proof}

Under noise, two observations that were originally related by a rigid transformation no longer map to a shared canonical frame. Consequently, even when using the canonicalization-based policy in Section~\ref{sec:can_policy_learning}, the learned policy still experiences pose-dependent inconsistencies and fails to generalize reliably to unseen viewpoints related by $\mathrm{SE}(3)$ transformations.

\begin{table}[t]
\renewcommand{\arraystretch}{1.5}
\caption{Key notation used in the paper.}
\label{tab:notation}
\centering
\scriptsize
\begin{tabularx}{\linewidth}{@{}>{\raggedright\arraybackslash}p{0.15\linewidth} X@{}}
\toprule
\textbf{Symbol} & \textbf{Meaning} \\
\midrule
$\pi$ & Policy mapping function that maps observations to actions. \\

$\delta$ & Stochastic noise operator applied to point cloud observations. \\

$T \in \mathrm{SE}(3)$ & Rigid transformation in the $\mathrm{SE}(3)$ group. \\

$\Psi$ & $\mathrm{SE}(3)$-equivariant encoder that maps point clouds to latent representations and predicts canonical transformations. \\

$f$ & Differentiable implicit surface function defining the underlying scene surface
$\mathcal{M}=\{\mathbf{x}\in\mathbb{R}^3 \mid f(\mathbf{x})=0\}$. \\

$X \in \mathbb{R}^{3 \times N}$ & Point cloud observation at a single timestep, consisting of $N$ 3D points. \\

$X_\delta$ & Noisy point cloud observation obtained by applying noise operator $\delta(\cdot)$ to $X$. \\

$X^\star$ & Denoised point cloud produced by the geometric denoising module. \\

$\hat{X}$ & Canonicalized point cloud obtained by removing pose variation via an inverse $\mathrm{SE}(3)$ transformation. \\

$\tilde{X}$ & Augmented point cloud generated by applying stochastic geometric perturbations (e.g., noise, dropout, cropping). \\

$\bar{\mathbf{x}}$ & Local neighborhood mean of a point, computed by averaging coordinates of neighboring points. \\

$\mathbf{s},\;\mathbf{a}$ & Robot proprioceptive state and action, represented in Cartesian end-effector space. \\
\bottomrule
\end{tabularx}
\end{table}

To see where this deviation arises, compare the clean canonical form
\[
    \hat{X}=\Psi(X)^{-1}X
\]
with its noisy counterpart
\[
    \hat{X}_\delta=\Psi(X_\delta)^{-1}X_\delta.
\]
The degradation of the canonicalization process stems from two simultaneous and coupled sources of error: the input point cloud itself is corrupted ($X_\delta \neq X$), and the resultant estimated transformation is unstable ($T_\delta = \Psi(X_\delta) \neq T$). Together, these factors cause the canonical representation to drift away from its ideal value, amplify pose-dependent inconsistencies, and ultimately propagate instability to the final action output through the policy's transformation step (Equation~\eqref{eq:ideal}). This structural failure highlights two fundamental noise robustness challenges that must be resolved to enable reliable imitation learning:

1. How can we obtain a canonical representation $\hat{X}_\delta$ that remains stable despite noisy observations, ensuring feature consistency?

2. How can we guarantee that the estimated transformation $T_\delta$ maintains consistency with the ideal $\mathrm{SE}(3)$ structure required for policy equivariance?

In the subsequent section, we introduce the proposed Noise-Robust
$\mathrm{SE}(3)$ Canonicalization and policy design, which directly address the two challenges identified above and aim to restore invariance and generalization capability.

For clarity, Table~\ref{tab:notation} summarizes the key notation used throughout the paper before we proceed to the Method section.

\section{Method}
In this section, we present our Noise-Robust $\mathrm{SE}(3)$-Equivariant Policy.
Building on the analysis in Section~\ref{sec:noise_breakdown}, we identify two key challenges:
(i) noise corrupts the input geometry, causing the canonical representation to drift and lose feature consistency, and
(ii) the transformation estimated from noisy observations becomes unstable, breaking the $\mathrm{SE}(3)$ structure required for equivariant policy behavior.
To resolve these issues, EquiForm integrates two components: geometric denoising of point clouds and equivariant consistency via contrastive learning. We begin by characterizing the sources and structure of noise in observed point clouds. Following this, we detail the implementation of both the geometric denoising module and the contrastive representation learning method. Finally, we outline the pipeline of the EquiForm policy.

\subsection{Point Cloud Noise Analysis}
\begin{figure}[t]
    \centering
    \includegraphics[width=0.9\linewidth]{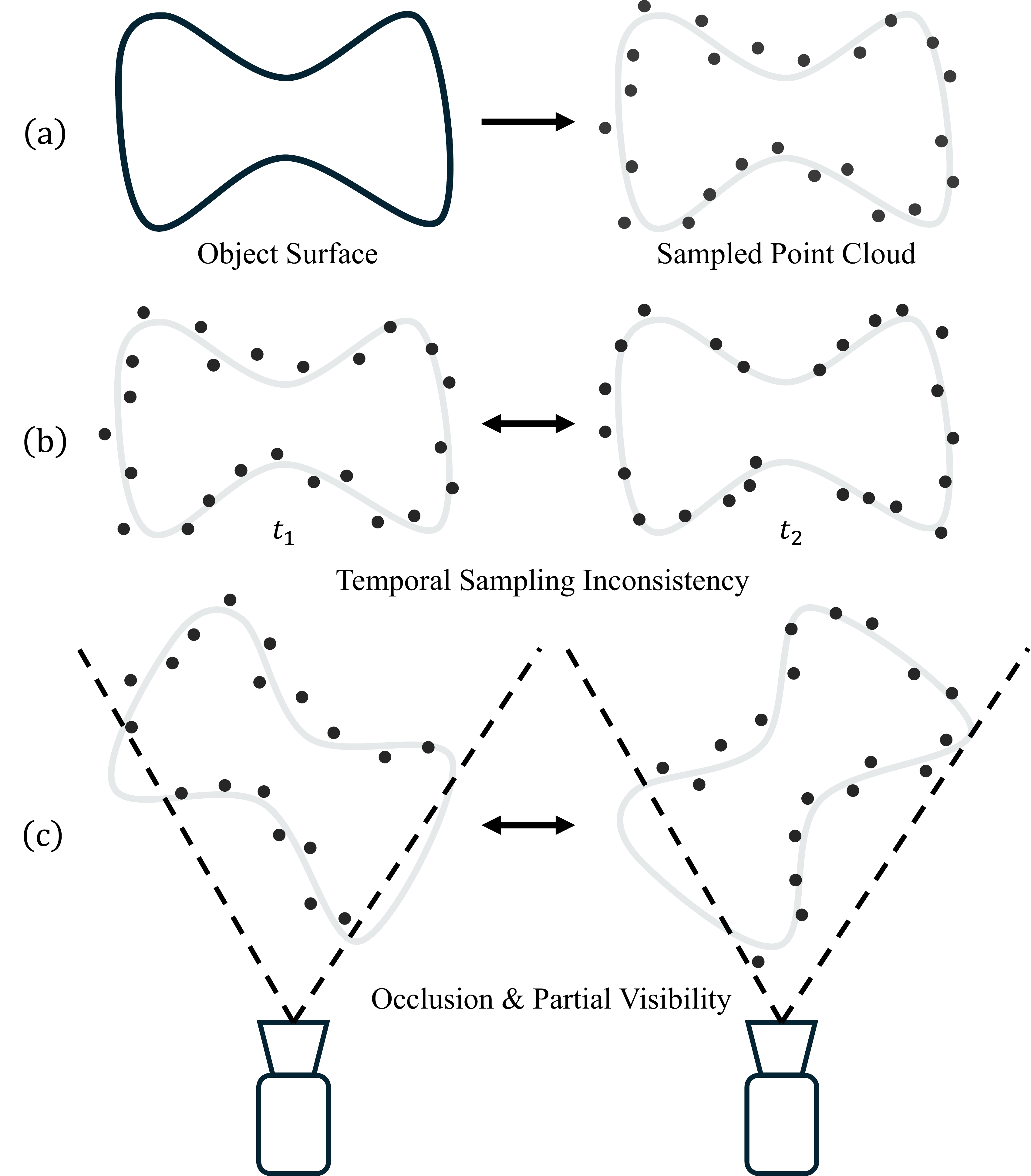}
    \caption{Overview of point cloud inconsistencies that challenge $\mathrm{SE}(3)$-equivariant policy learning. Top: An object’s continuous surface (left) and its ideal discretization into a sampled point cloud (right). Middle: Even without object motion, repeated sampling yields inconsistent point sets due to depth noise, discretization artifacts, and temporal variability. Bottom: Layout changes in the scene cause occlusion and partial visibility, breaking the assumption that observations of the same object remain $\mathrm{SE}(3)$-consistent.}
    \label{fig:noise_overview}
\end{figure}
In vision-conditioned imitation learning, the policy relies heavily on the
observed point clouds to infer the appropriate action.
However, point clouds captured by commodity RGB-D sensors contain several forms of
imperfections that affect both the geometry and the sampling pattern of the
observed scene. Common sources of these inaccuracies include:
\begin{itemize}
    \item \textbf{Depth sensing noise:} small random fluctuations in the depth
          measurement that cause points to deviate from their true surface
          locations.

    \item \textbf{Projection and discretization effects:} converting a continuous
          surface into a pixel-based depth image introduces rounding and
          interpolation artifacts that distort fine geometric details.

    \item \textbf{Sampling inconsistency:} two observations of the same static
      object represent different discrete samples of the underlying surface.
      Downsampling to a fixed-size point set further introduces frame-to-frame
      variation in the selected points.

    \item \textbf{Occlusion and partial visibility:} shifts in viewpoint or
          interactions with the robot and environment can hide portions of the
          object, leading to missing or incomplete observations.
\end{itemize}

As illustrated in Fig.~\ref{fig:noise_overview}(a), a point cloud is only a
discrete approximation of the underlying object surface, and two observations
of the same static scene may not contain exactly the same points. In practice,
depth noise, discretization in the depth image, and sampling inconsistencies
lead to noticeably different point sets across timesteps, as shown in
Fig.~\ref{fig:noise_overview}(b). As a result, two observations that represent
the same physical state may appear different, causing their
learned feature representations to drift apart. Moreover, reliable
$\mathrm{SE}(3)$ equivariance assumes that observations related by a rigid transformation
remain consistent. This assumption is easily violated in practice, as
illustrated in Fig.~\ref{fig:noise_overview}(c): changes in viewpoint or scene
layout alter which parts of the object are visible, introducing occlusions and
partial observations. Such inconsistencies break the $\mathrm{SE}(3)$ relationship between
views, making it difficult for canonicalization modules to infer stable
transformations and ultimately hindering equivariant policy learning.

These disturbances give rise to corrupted observations $X_\delta = \delta(X)$, where points deviate from the underlying surface and the sampling pattern varies across frames. Such perturbations break the geometric consistency required for $\mathrm{SE}(3)$-equivariant canonicalization.
Consequently, our first goal is to recover a geometrically faithful point cloud $X^\star$ that restores the underlying surface structure and yields a stable input for canonicalization. A geometric denoising module is introduced for this purpose.

\subsection{Geometric Denoising of Point Clouds}
\label{sec:geometric_denoising}
\begin{figure}[t]
    \centering
    \includegraphics[width=1.0\linewidth]{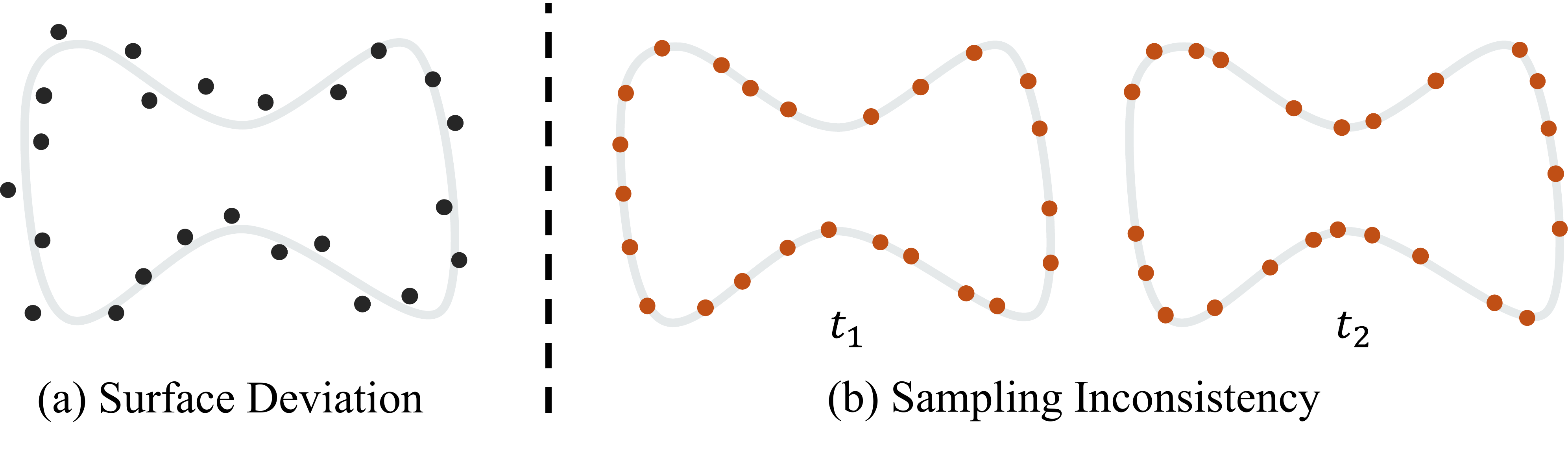}
    \caption{Two common sources of point cloud noise: (a) surface deviation, where individual points deviate from the underlying geometry, and (b) sampling inconsistency, where repeated measurements sample different subsets of the surface.}
    \label{fig:noise_type}
\end{figure}
Formally, we assume that the underlying scene is a smooth surface 
$\mathcal{M}=\{\mathbf{x}\in\mathbb{R}^3 \mid f(\mathbf{x})=0\}$, where 
$f$ is a differentiable implicit function. Given a noisy point set 
$X_\delta=\{\delta(\mathbf{x}_i)\}$, our objective is to recover a cleaned point set 
$X^\star=\{\mathbf{x}^\star_i\}$ that more accurately reflects the geometry of 
$\mathcal{M}$.
We decompose observation noise into two components:
(i) \emph{normal-direction deviation}, in which points drift away from the
underlying surface due to measurement noise, as illustrated in
Fig.~\ref{fig:noise_type}(a); and
(ii) \emph{tangent-direction inconsistency}, arising from sampling variation and
local density irregularities along the surface, as shown in
Fig.~\ref{fig:noise_type}(b).

For notational simplicity, we denote a noisy observation
$\delta(\mathbf{x}_i)$ by $\mathbf{x}_\delta$ in the following derivations,
omitting both the explicit noise operator $\delta(\cdot)$ and the point index
when no ambiguity arises.
To correct each noisy point $\mathbf{x}_\delta$, we apply two corresponding
geometric updates, which are described below.

\paragraph{Normal-direction correction}
Due to measurement noise, an observed point
$\mathbf{x}_\delta \in X_\delta$ generally satisfies $f(\mathbf{x}_\delta) \neq 0$
and deviates from the underlying surface.
We construct a local surface reference as
\begin{equation}
\label{eq:x_mean}
\bar{\mathbf{x}}_\delta
=
\frac{1}{|\mathcal{N}(\mathbf{x}_\delta)|}
\sum_{\mathbf{x}_i \in \mathcal{N}(\mathbf{x}_\delta)}
\mathbf{x}_i,
\end{equation}
where $\mathcal{N}(\mathbf{x}_\delta)$ denotes the local neighborhood of
$\mathbf{x}_\delta$ in the point cloud $X_\delta$.
We approximate $f(\mathbf{x}_\delta)$ using a first-order Taylor expansion
around $\bar{\mathbf{x}}_\delta$:
\begin{equation*}
f(\mathbf{x}_\delta)
\approx
f(\bar{\mathbf{x}}_\delta)
+
\nabla f(\bar{\mathbf{x}}_\delta)^\top
(\mathbf{x}_\delta - \bar{\mathbf{x}}_\delta).
\end{equation*}
Under the assumption of local surface smoothness, $\bar{\mathbf{x}}_\delta$
provides a first-order approximation of a surface point, we have
$f(\bar{\mathbf{x}}_\delta) \approx 0$, yielding
\begin{equation}
\label{eq:f_x_delta}
f(\mathbf{x}_\delta)
\approx
\nabla f(\bar{\mathbf{x}}_\delta)^\top
(\mathbf{x}_\delta - \bar{\mathbf{x}}_\delta).
\end{equation}
Our goal is to compute a corrected point $\mathbf{x}$ that satisfies
$f(\mathbf{x}) = 0$.
Using a first-order approximation of $f(\mathbf{x})$ around
$\mathbf{x}_\delta$, we obtain
\begin{equation*}
f(\mathbf{x})
\approx
f(\mathbf{x}_\delta)
+
\nabla f(\mathbf{x}_\delta)^\top
(\mathbf{x} - \mathbf{x}_\delta)
= 0.
\end{equation*}
Substituting Equation~\eqref{eq:f_x_delta} into the above expression leads to the
linear constraint
\begin{equation*}
\nabla f(\mathbf{x}_\delta)^\top
(\mathbf{x} - \mathbf{x}_\delta)
=
-
\nabla f(\bar{\mathbf{x}}_\delta)^\top
(\mathbf{x}_\delta - \bar{\mathbf{x}}_\delta).
\end{equation*}

In practice, we estimate a local gradient direction for each point from its
discrete neighborhood to reduce the effect of measurement noise.
This neighborhood-based estimation provides a smooth, point-wise approximation
of the local surface orientation.
Under this smoothing operation and the assumption of local surface continuity,
we approximate
$\nabla f(\bar{\mathbf{x}}_\delta) \approx \nabla f(\mathbf{x}_\delta).$
Defining the unit surface normal as
\[
\mathbf{n}_\delta
=
\frac{\nabla f(\mathbf{x}_\delta)}
{\|\nabla f(\mathbf{x}_\delta)\|},
\]
the normal-direction update is then given by
\begin{equation}
\label{eq:normal_update}
\mathbf{x}
=
\mathbf{x}_\delta
-
\mathbf{n}_\delta \mathbf{n}_\delta^\top
(\mathbf{x}_\delta - \bar{\mathbf{x}}_\delta),
\end{equation}
which projects the noisy observation back toward the implicit surface by
removing its normal-direction deviation.

\begin{figure}[t]
    \centering
    \includegraphics[width=0.8\linewidth]{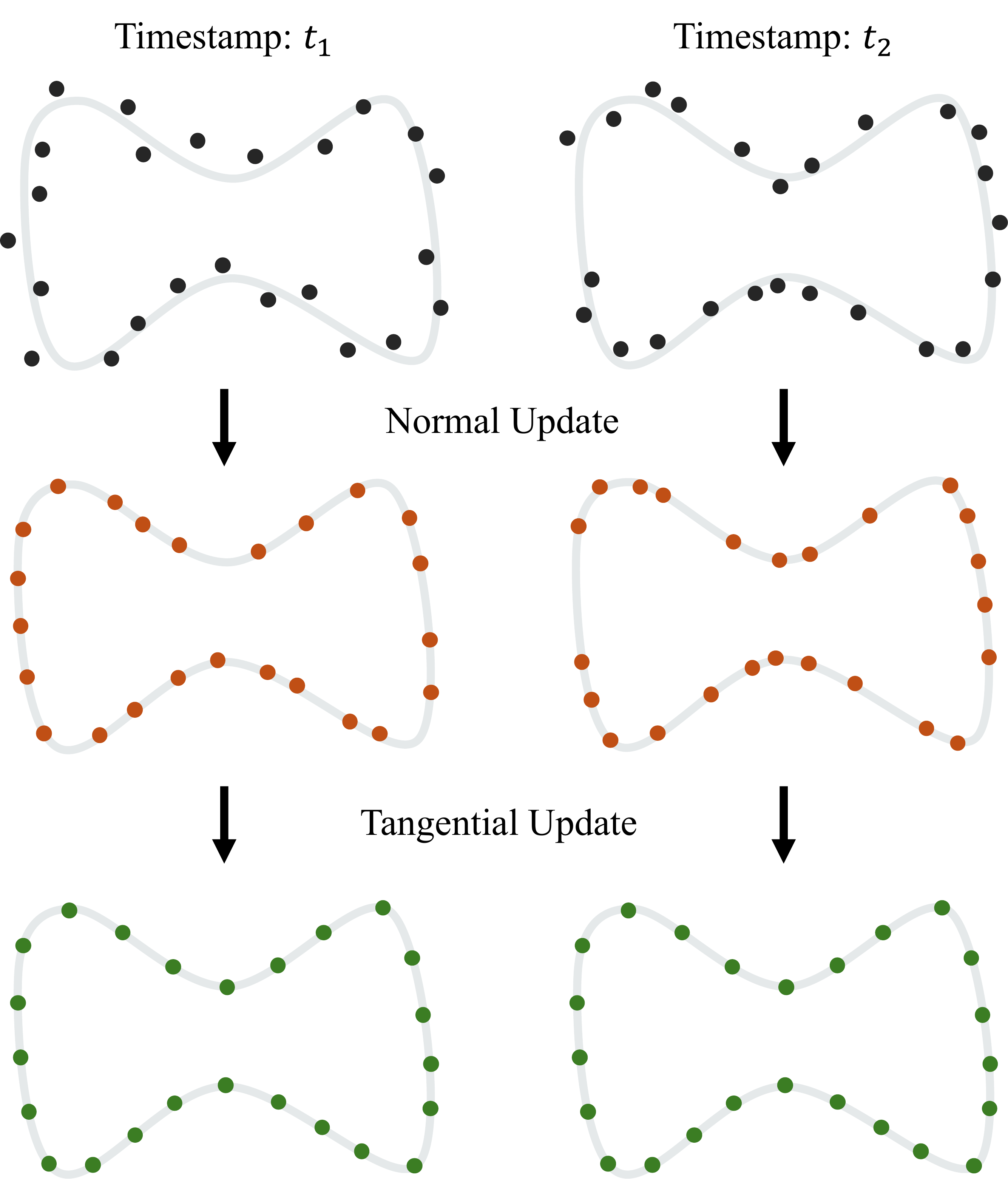}
    \caption{Point clouds captured at different timestamps exhibit geometric noise and sampling inconsistency. A normal update first reduces surface deviation, followed by a tangential update that enforces sampling consistency, yielding a stable and smooth representation.}
    \label{fig:denoise}
\end{figure}
\paragraph{Tangent-direction correction}
After removing off-surface deviation via the normal-direction update, the
corrected point $\mathbf{x}$ lies approximately on the underlying surface.
However, due to sampling variation and local density irregularity, the point
cloud may still exhibit inconsistencies along the tangent directions of the
surface.
Such tangent-direction noise does not violate the surface constraint
$f(\mathbf{x}) = 0$, but leads to non-uniform sampling across observations.

To mitigate these effects, we introduce a tangent-direction refinement that
enforces local sampling consistency while preserving the surface constraint.
Specifically, we aim to suppress high-frequency sampling variation along the
surface by applying a spatial low-pass filtering operation within the tangent
plane.
This is achieved by encouraging the correction displacement to align with the
local neighborhood offset $\mathbf{x}-\bar{\mathbf{x}}$, while restricting the
correction to lie entirely in the tangent space.

Formally, we seek an updated point
$\mathbf{x}^\star = \mathbf{x} - \Delta \mathbf{x}$,
where $\Delta \mathbf{x}$ denotes the tangent-direction correction displacement.
We consider the following constrained optimization problem:
\begin{equation}
\label{eq:tangent_opt}
\min_{\Delta \mathbf{x}}
\;
\|\Delta \mathbf{x} - (\mathbf{x} - \bar{\mathbf{x}})\|_2^2
\qquad
\text{s.t.} \quad
\mathbf{n}^\top \Delta \mathbf{x} = 0 ,
\end{equation}
where the constraint enforces that the correction lies entirely within the
tangent plane defined by the local surface orientation.
The Lagrangian associated with Equation~\eqref{eq:tangent_opt} is given by
\begin{equation*}
\mathcal{L}(\Delta \mathbf{x}, \lambda)
=
\|\Delta \mathbf{x} - (\mathbf{x} - \bar{\mathbf{x}})\|_2^2
+
\lambda\, \mathbf{n}^\top \Delta \mathbf{x}.
\end{equation*}
Taking the derivative with respect to $\Delta \mathbf{x}$ and setting it to zero
yields
\begin{equation}
\label{eq:tangent_dx}
\Delta \mathbf{x}
=
(\mathbf{x} - \bar{\mathbf{x}})
-
\frac{\lambda}{2}\,\mathbf{n}.
\end{equation}
Enforcing the tangency constraint in Equation.~\eqref{eq:tangent_opt} leads to
\begin{equation*}
\lambda
=
2\,\mathbf{n}^\top
(\mathbf{x} - \bar{\mathbf{x}}).
\end{equation*}
Substituting this result into Equation.~\eqref{eq:tangent_dx}, we obtain
\begin{equation*}
\Delta \mathbf{x}
=
\left(
I
-
\mathbf{n} \mathbf{n}^\top
\right)
(\mathbf{x} - \bar{\mathbf{x}}).
\end{equation*}
Finally, the tangent-direction update is given by
\begin{equation}
\label{eq:tangent_update}
\mathbf{x}^\star
=
\mathbf{x}
-
\left(
I
-
\mathbf{n} \mathbf{n}^\top
\right)
(\mathbf{x} - \bar{\mathbf{x}}),
\end{equation}
which performs spatial smoothing within the tangent plane while preserving the
surface constraint.

\begin{algorithm}[t]
\caption{Progressive Geometric Denoising}
\label{alg:geometric_denoising}
\KwIn{Noisy point cloud $X_\delta = \{\mathbf{x}_\delta^i\}_{i=1}^N$, neighborhood size $k$}
\KwOut{Denoised point cloud $X^\star$}
\quad// Normal-direction correction \;
1:  \textbf{foreach} $\mathbf{x}_\delta \in X_\delta$ \textbf{do}\;
2:\quad Compute $k$-NN index set $\mathcal{I}(\mathbf{x}_\delta)$ in $X_\delta$\;
3:\quad Estimate local surface normals
      $\{\mathbf{n}_j\}_{j \in \mathcal{I}(\mathbf{x}_\delta)}$
      via PCA on centered neighbors\;
4:\quad Compute local mean
      $\bar{\mathbf{x}}_\delta = \frac{1}{k} \sum_{j \in \mathcal{I}(\mathbf{x}_\delta)} \mathbf{x}_j$
      (Equation~\eqref{eq:x_mean})\;
5:\quad Compute aggregated normal
      $\mathbf{n}_\delta =
      \mathrm{Normalize}\!\left(
      \frac{1}{k} \sum_{j \in \mathcal{I}(\mathbf{x}_\delta)} \mathbf{n}_j
      \right)$\;
6:\quad Update point by normal projection
      $\mathbf{x}
      =
      \mathbf{x}_\delta
      -
      \mathbf{n}_\delta \mathbf{n}_\delta^\top
      (\mathbf{x}_\delta - \bar{\mathbf{x}}_\delta)$
      (Equation~\eqref{eq:normal_update})\;
7:\quad Add $\mathbf{x}$ to $X$\;
8: \textbf{end}\;
\quad// Tangent-direction correction \;
9: \textbf{foreach} $\mathbf{x} \in X$ \textbf{do}\;
10:\quad Compute $k$-NN index set $\mathcal{I}(\mathbf{x})$ in $X$\;
11:\quad Estimate local normals and compute
       $\bar{\mathbf{x}}$ and $\mathbf{n}$
       as in the normal-direction correction\;
12:\quad Update point by tangent projection
       $\mathbf{x}^\star
       =
       \mathbf{x}
       -
       \left(I - \mathbf{n}\mathbf{n}^\top\right)
       (\mathbf{x} - \bar{\mathbf{x}})$
       (Equation~\eqref{eq:tangent_update})\;
13:\quad Add $\mathbf{x}^\star$ to $X^\star$\;
14: \textbf{end}\;

15: \Return $X^\star$\;
\end{algorithm}

Together, Equations~\eqref{eq:normal_update} and~\eqref{eq:tangent_update}
define a progressive geometric denoising pipeline, which is illustrated in
Fig.~\ref{fig:denoise} and formally described in Algorithm~\ref{alg:geometric_denoising}.
The normal-direction update first enforces surface consistency by removing
off-surface deviation, followed by a tangent-direction refinement that improves
sampling uniformity along the surface.
The resulting point cloud exhibits improved geometric stability across
observations, providing a reliable input for subsequent canonicalization and
$\mathrm{SE}(3)$-equivariant policy learning.

\subsection{Equivariant Consistency via Contrastive Learning}
\label{sec:contrastive}
\begin{figure}[t]
    \centering
    \includegraphics[width=0.85\linewidth]{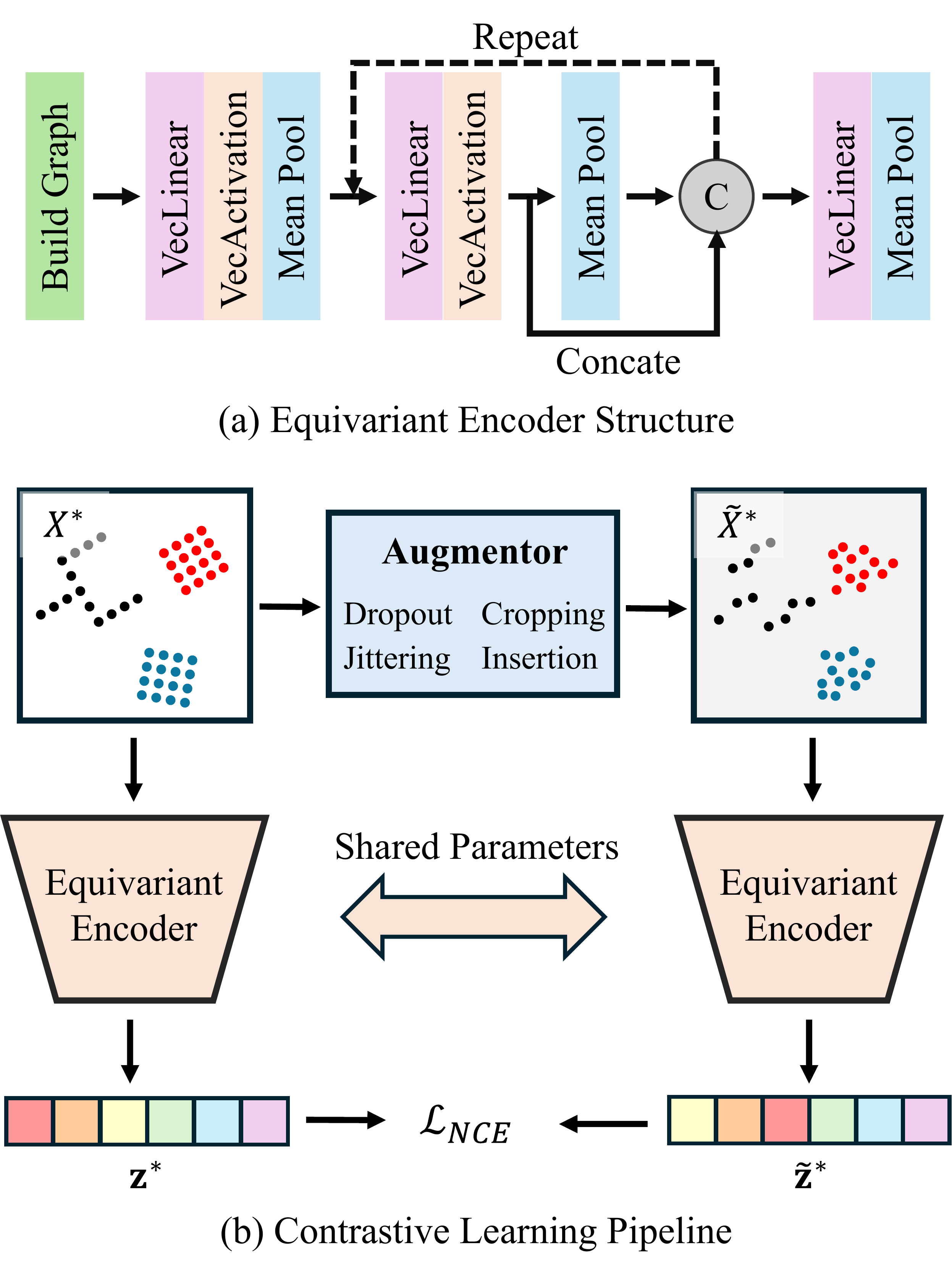}
    \caption{Overview of the equivariant contrastive learning framework.
(a) Equivariant encoder structure.
The encoder is constructed using Vector Neuron (VN) layers
to extract $\mathrm{SE}(3)$-equivariant features from the input point clouds.
(b) Contrastive learning pipeline.
Starting from a denoised point cloud $X^\star$, an augmented view
$\tilde{X}^\star$ is generated via geometric perturbations, including
jittering, cropping, and point dropout or insertion.
Both views are processed by a shared equivariant encoder, and a contrastive loss is applied in the equivariant feature space to encourage feature consistency under noise, while preserving the $\mathrm{SE}(3)$-equivariant structure of the representation.
}
    \label{fig:contrastive}
\end{figure}
So far, Question~1 in Section~\ref{sec:noise_breakdown}, illustrated in
Fig.~\ref{fig:noise_overview}(b), has been addressed: after geometric denoising,
the point cloud $X^\star$ becomes stable and reliable, with surface-deviation
noise reduced and sampling rendered more uniform.
However, as illustrated in Fig.~\ref{fig:noise_overview}(c), even when observing
the same object across different frames, variations in scene layout or camera
viewpoint may introduce occlusions and partial visibility.
As a result, the observed point clouds can still exhibit noticeable
inconsistencies, leaving Question~2 unresolved and posing challenges for the
subsequent canonicalization process.
In this subsection, we introduce a contrastive learning strategy to alleviate
this issue and promote equivariant consistency across observations.

Contrastive learning has emerged as a widely adopted paradigm for self-supervised
representation learning, with the objective of learning an embedding space in
which semantically similar samples are mapped close together, while dissimilar
samples are pushed apart.
This is typically achieved by constructing positive pairs that share the same
underlying content under different views or perturbations, along with negative
pairs drawn from distinct samples.
By enforcing consistency across augmented views, contrastive learning has been
shown to improve robustness to nuisance factors such as noise, partial
observations, and viewpoint variations.

Formally, let $\mathbf{x}_i$ denote an input sample, and let
$\tilde{\mathbf{x}}_i^{(1)}$ and $\tilde{\mathbf{x}}_i^{(2)}$ be two augmented
views generated from the same underlying sample.
An encoder $\Phi(\cdot)$ maps each view to a latent representation
$\tilde{\mathbf{z}}_i^{(1)} = \Phi(\tilde{\mathbf{x}}_i^{(1)})$ and
$\tilde{\mathbf{z}}_i^{(2)} = \Phi(\tilde{\mathbf{x}}_i^{(2)})$.
Contrastive learning encourages the representations of such positive pairs to be
similar, while separating them from representations of other samples in the
batch, which serve as negative examples.

We adopt the InfoNCE loss~\cite{oord2018representation}, which is commonly
used in contrastive learning, to enforce this behavior.
For a positive pair
$(\tilde{\mathbf{z}}_i^{(1)}, \tilde{\mathbf{z}}_i^{(2)})$, the loss is defined as
\begin{equation}
\label{eq:infoNCE}
\mathcal{L}_{\mathrm{NCE}}
=
- \log
\frac{\exp\left( \mathrm{sim}(\tilde{\mathbf{z}}_i^{(1)}, \tilde{\mathbf{z}}_i^{(2)}) / \tau \right)}
{\sum_{j=1}^{N}
\exp\left( \mathrm{sim}(\tilde{\mathbf{z}}_i^{(1)}, \tilde{\mathbf{z}}_j^{(2)}) / \tau \right)},
\end{equation}
\begin{figure*}[t]
    \centering
    \includegraphics[width=0.95\linewidth]{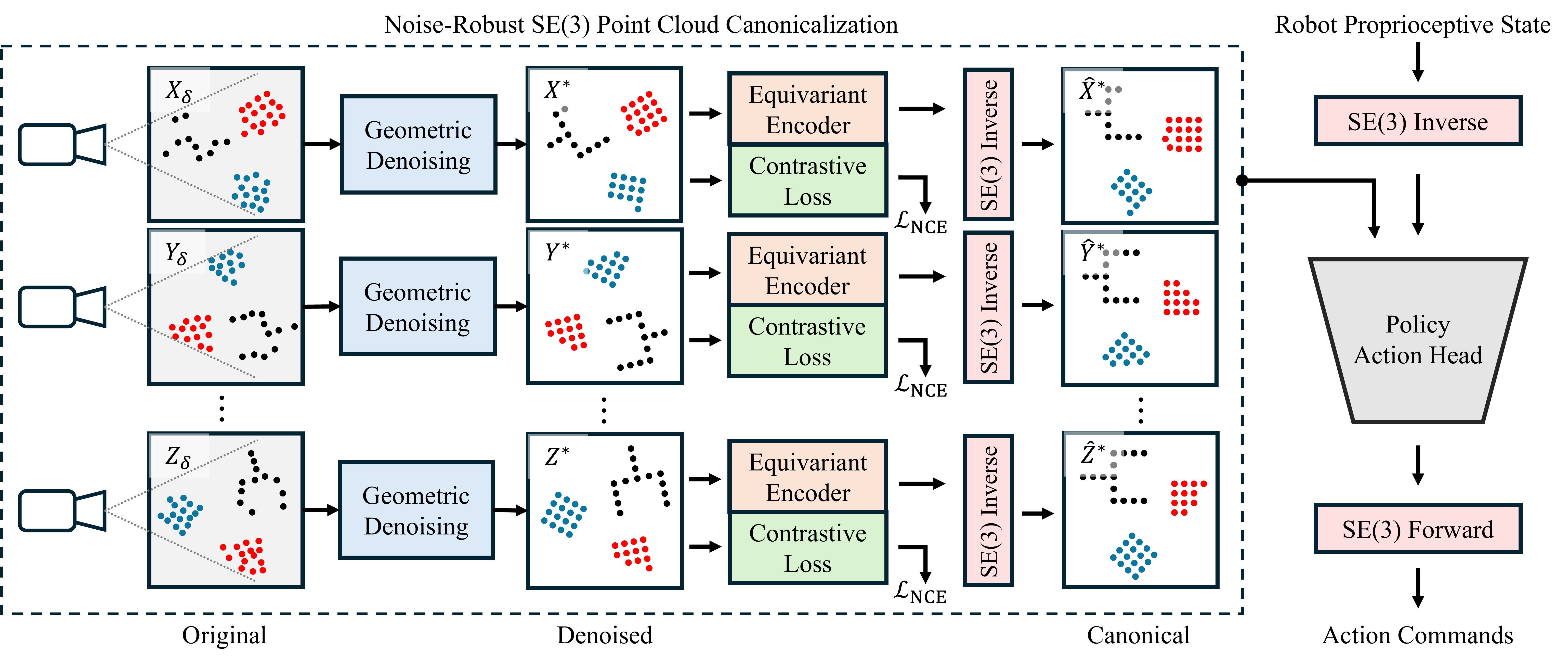}
    \caption{
Overview of the proposed EquiForm policy learning pipeline.
Given noisy point cloud observations $X_\delta, Y_\delta, \ldots, Z_\delta$,
a geometric denoising module is first applied to suppress surface deviation and
sampling inconsistencies, yielding denoised point clouds
$X^\star, Y^\star, \ldots, Z^\star$.
The denoised observations are then processed by a shared
$\mathrm{SE}(3)$-equivariant encoder to extract equivariant feature
representations.
During training, a contrastive loss (InfoNCE) is imposed on the equivariant
features obtained from augmented views of the same underlying point cloud,
encouraging feature consistency under geometric perturbations and partial
observations.
Based on the stabilized equivariant features, the $\mathrm{SE}(3)$-equivariant
encoder predicts an input-specific rigid transformation, which is then inverted
to obtain a canonicalized point cloud representation.
The canonical representation, together with robot proprioceptive inputs, is fed
into the action head to predict control commands, which are finally mapped back
to the original frame via the corresponding $\mathrm{SE}(3)$ forward
transformation.
    }
    \label{fig:pipeline}
\end{figure*}
where $\mathrm{sim}(\cdot,\cdot)$ denotes cosine similarity and $\tau$ is a
temperature parameter.
Minimizing this loss increases the similarity between representations of
positive pairs while reducing their similarity to negative samples.

In our setting, the input sample corresponds to a denoised point cloud
$X^\star$, and the encoder is an equivariant network
$\Psi(\cdot)$, whose architecture is illustrated in
Fig.~\ref{fig:contrastive}(a).
A natural instantiation of such an equivariant encoder is the Vector Neuron
(VN) framework~\cite{VN}, which directly takes point cloud coordinates as input
and operates on vector-valued features rather than scalar representations.

An augmented view $\tilde{X}^\star$ is generated from $X^\star$ via stochastic
geometric perturbations, including Gaussian noise injection as well as
variations in viewpoint or point sampling, such as cropping, point dropout,
and point insertion, as illustrated in Fig.~\ref{fig:contrastive}(b).
The denoised point cloud and its augmented counterpart are processed by the
same equivariant encoder $\Psi(\cdot)$ with shared parameters, producing hidden
equivariant feature representations $\mathbf{z}^\star$ and
$\tilde{\mathbf{z}}^\star$, respectively.
By applying the contrastive objective in this equivariant feature space, we
encourage the encoder to produce consistent representations under geometric
perturbations and partial observations, thereby improving equivariant
consistency without breaking the desired $\mathrm{SE}(3)$ structure required
for subsequent canonicalization and policy learning.

As discussed in Section~\ref{sec:can_policy_learning}, the canonicalization
procedure maps an input point cloud to a rigid transformation.
When the hidden features of two input point clouds are similar, the resulting
estimated rigid transformations are also expected to be close.
By explicitly regularizing the parameters of the equivariant encoder through
contrastive learning, we encourage robustness to noise and partial
observations at the feature level.
Consequently, in addressing Question~2 in Section~\ref{sec:noise_breakdown},
the estimated transformation $T$ remains consistent between a clean point
cloud and its noisy or partially observed counterparts.

\subsection{EquiForm Policy Learning Pipeline}
We now present the complete EquiForm policy learning pipeline, which integrates
geometric denoising and equivariant contrastive representation learning into a
unified framework for noise-robust $\mathrm{SE}(3)$-equivariant policy learning.
An overview of the pipeline is illustrated in Fig.~\ref{fig:pipeline}.

At each timestep, the policy receives a raw point cloud observation
$X_\delta = \delta(X)$ captured by an RGB-D sensor.
As analyzed in Section~\ref{sec:noise_breakdown}, such observations are corrupted
by surface deviation, sampling inconsistency, and partial visibility, which
violate the geometric assumptions required for stable canonicalization.
To address this, the observation is first processed by the geometric denoising
module introduced in Section~\ref{sec:geometric_denoising}.
This module applies a normal-direction correction to suppress off-surface
deviation, followed by a tangent-direction refinement that enforces local
sampling consistency.
The output is a geometrically faithful and temporally stable point cloud
$X^\star$, which provides a reliable approximation of the underlying object
surface.

The denoised point cloud $X^\star$ is then encoded by an
$\mathrm{SE}(3)$-equivariant encoder $\Psi(\cdot)$ to extract hidden equivariant
feature representations.
While geometric denoising restores surface-level consistency, it does not fully
resolve ambiguities introduced by occlusion and partial observations.
To further stabilize the representation, contrastive learning is employed
during training to regularize the equivariant feature space, as described
in Section~\ref{sec:contrastive}.
Specifically, stochastic geometric perturbations are applied to $X^\star$ to
generate augmented views that simulate viewpoint changes, sampling variation,
and noise.
Both the original and augmented point clouds are processed by the same
equivariant encoder with shared parameters, and a contrastive loss (InfoNCE) is
imposed on the resulting equivariant features.
This regularization encourages feature-level consistency across perturbations
while preserving the underlying $\mathrm{SE}(3)$ structure, thereby mitigating
representation drift caused by noise and occlusion.

\begin{figure*}[t]
    \centering
    \includegraphics[width=1.0\linewidth]{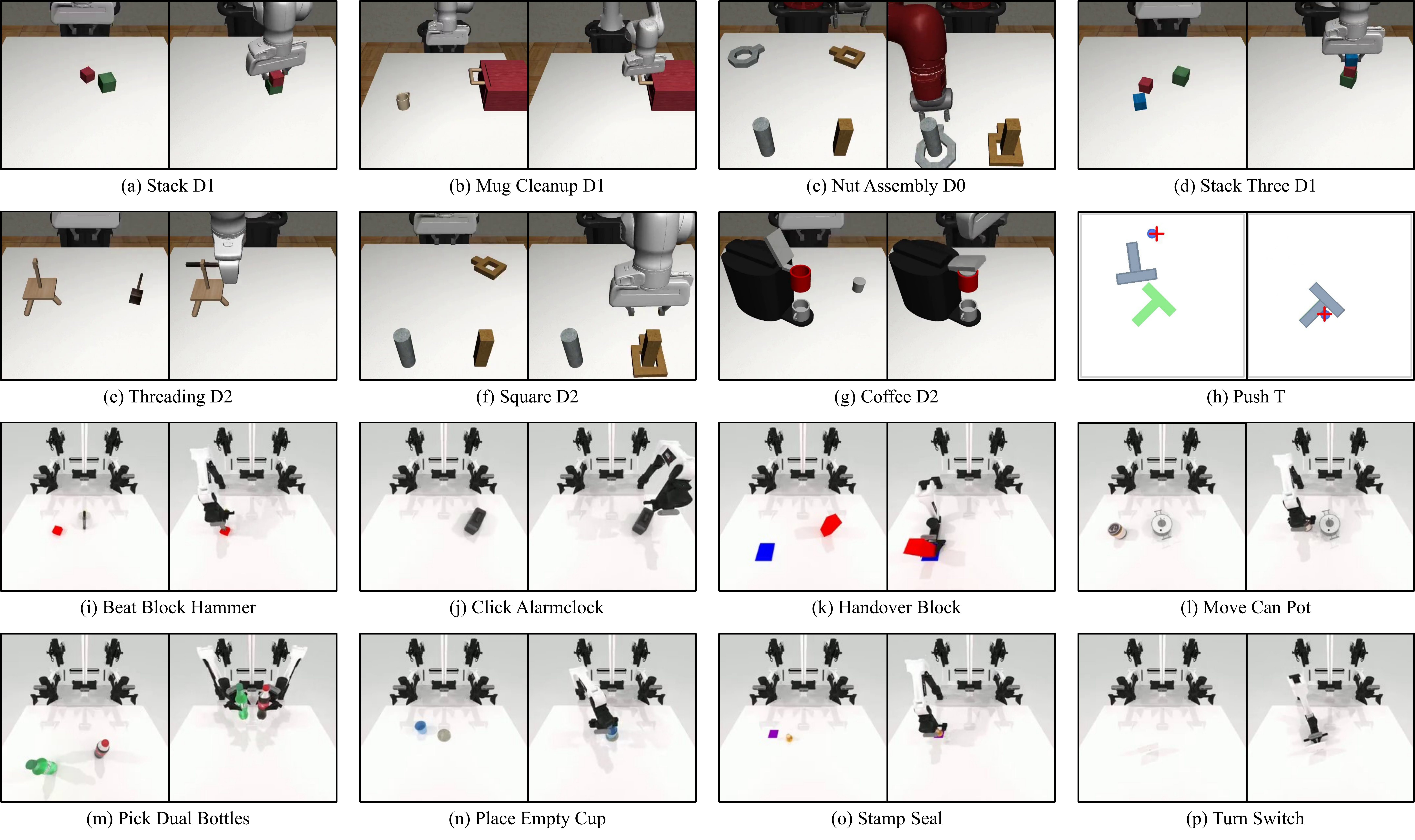}
    \caption{Visualization of the 16 simulated manipulation tasks used in the simulation benchmark.
    Subfigures (a)–(g) are from MimicGen~\cite{robomimic}, (h) is from Diffusion Policy~\cite{diffusion_policy}, and (i)–(p) are from RoboTwin~\cite{chen2025robotwin}.
    In each subfigure, the left image shows the initial state, and the right image shows the goal state of the task.}
    \label{fig:sim_tasks}
\end{figure*}
\begin{table*}[t]
\caption{Task success rates (\%) across different policies and tasks.}
\label{tab:sim_comparison}
\centering
\renewcommand{\arraystretch}{1.15}
\setlength{\tabcolsep}{5pt}

\newcolumntype{C}{>{\centering\arraybackslash}X}
\newcommand{\tblwidth}{0.97\textwidth}

\begin{tabularx}{\tblwidth}{lCCCCCC}
\toprule

 & \textbf{Stack D1} & \textbf{Mug Cleanup D1} & \textbf{Nut Assembly D0} & \textbf{Stack Three D1} & \textbf{Threading D2} & \textbf{Square D2} \\
\midrule
DP3 & 23 & 28 & 21 & 1 & 8 & 3 \\
Canonical Policy & 79 & 38 & 42 & 9 & 15 & 15 \\
\rowcolor{gray!30}
EquiForm & \textbf{94} & \textbf{50} & \textbf{72} & \textbf{24} & \textbf{39} & \textbf{37} \\

\midrule

 & \textbf{Coffee D2} & \textbf{PushT} & \textbf{Beat Block Ham.} & \textbf{Click Alarmclock} & \textbf{Handover Block} & \textbf{Move Can Pot} \\
\midrule
DP3 & 43 & 71 & 72 & 77 & 70 & 70 \\
Canonical Policy & 51 & 89 & \textbf{93} & 65 & 68 & 84 \\
\rowcolor{gray!30}
EquiForm & \textbf{52} & \textbf{93} & 87 & \textbf{89} & \textbf{85} & \textbf{90} \\

\midrule

 & \textbf{Pick Dual Bottles} & \textbf{Place Empty Cup} & \textbf{Stamp Seal} & \textbf{Turn Switch} & \multicolumn{2}{c}{\textbf{Average Success Rate}} \\
\midrule
DP3 & 60 & 65 & 18 & 46 & \multicolumn{2}{c}{42.3} \\
Canonical Policy & \textbf{96} & \textbf{81} & 34 & 46 & \multicolumn{2}{c}{56.6} \\
\rowcolor{gray!30}
EquiForm & 86 & 76 & \textbf{43} & \textbf{49} & \multicolumn{2}{c}{\textbf{66.6}} \\

\bottomrule
\end{tabularx}

\vspace{0.4em}
\footnotesize
\parbox{\tblwidth}{
We report the mean success rate across 16 simulation benchmarks.
\textbf{Bold} indicates the best performance for each task.
All policies are evaluated using point cloud observations containing only 3D coordinates, without incorporating color information.
}
\end{table*}

Based on the stabilized equivariant features, the
$\mathrm{SE}(3)$-equivariant encoder further predicts a rigid transformation
$T \in \mathrm{SE}(3)$ for each input observation, which serves as the
canonicalization transform.
Specifically, the encoder first maps the input point cloud to an intermediate
equivariant feature representation, and subsequently regresses the
corresponding rigid transformation from this feature.
While the canonicalization transform is obtained from the encoder output, the
contrastive loss is applied at the level of the intermediate equivariant
features, encouraging feature consistency across noisy and partially observed
inputs.
This feature-level regularization stabilizes the downstream transformation
prediction, resulting in consistent estimates of $T$ under geometric
perturbations.
The canonicalized point cloud is then obtained by applying the inverse
transformation $T^{-1}$, aligning the input to a shared canonical frame that is
independent of viewpoint and observation-specific artifacts.

Finally, the canonicalized representation is provided as input to the policy
network, together with robot proprioceptive information, to predict control
commands.
These actions are mapped back to the original observation frame via the
corresponding $\mathrm{SE}(3)$ forward transformation.
Because the policy operates on canonicalized inputs and the transformations
preserve $\mathrm{SE}(3)$ structure, the resulting policy is equivariant by
construction and exhibits improved robustness to sensor noise, sampling
variation, and partial observability at test time.

\section{Simulation Experiments}
We evaluate the proposed EquiForm policy through a comprehensive set of simulation experiments that assess both task performance and robustness.
We begin by describing the simulation environment, task setup, baseline methods, and training details.
We then benchmark EquiForm against representative point cloud policy baselines to evaluate overall task performance.
To further analyze robustness, we examine policy behavior under controlled noise perturbations as well as under explicit rigid transformations applied to the input observations.
Finally, we conduct ablation studies to quantify the individual contributions of the geometric denoising module and the equivariant contrastive learning component.

\subsection{Experimental Setup}
We evaluate the proposed EquiForm policy on a diverse set of simulated
manipulation tasks, as illustrated in Fig.~\ref{fig:sim_tasks}.
The benchmark consists of three categories of tasks that vary in manipulation skills, object geometry, and interaction complexity.

\textbf{MimicGen Tasks}~\cite{robomimic}, shown in
Fig.~\ref{fig:sim_tasks}(a)--(g), consist of a variety of single-arm household manipulation scenarios, such as coffee making and table organization. Point cloud observations are captured from four fixed camera views. The original dataset provides point clouds with 1024 points per observation.

\textbf{Push-T Task}, shown in Fig.~\ref{fig:sim_tasks}(h), is adapted from Diffusion Policy~\cite{diffusion_policy}.
Originally formulated as a 2D task, we extend it to 3D by lifting each of the nine 2D keypoints into $\mathbb{R}^3$ with a zero $z$-coordinate, forming a sparse 3D point cloud observation.
The objective is to push a T-shaped block to align it with a target pose.

\textbf{RoboTwin Tasks}~\cite{chen2025robotwin}, visualized in
Fig.~\ref{fig:sim_tasks}(i)--(p), are bimanual manipulation tasks designed to evaluate robustness and generalization under complex geometric interactions.
RoboTwin provides a large-scale dual-arm benchmark with diverse object
geometries and structured domain randomization, resulting in varied geometric configurations in point cloud observations.
Point clouds are captured from both head-mounted and wrist-mounted camera views.
Similar to the MimicGen tasks, the RoboTwin dataset provides point cloud observations with 1024 points per instance.

To assess both task performance and robustness, we compare EquiForm against representative point cloud–based policies under consistent experimental settings.

\textbf{DP3}~\cite{DP3} is a diffusion-based policy that directly processes raw point cloud observations to actions using a simple MLP-based point cloud encoder.

\textbf{Canonical Policy}~\cite{zhang2025canonical} employs an
$\mathrm{SE}(3)$-equivariant encoder based on the Vector Neuron framework~\cite{VN} to estimate a canonical transformation, followed by policy learning in the canonical space.
EquiForm builds upon this framework by introducing geometric denoising and equivariant contrastive regularization to improve robustness under noisy and partially observed inputs.

Since EquiForm applies rigid transformations in $\mathrm{SE}(3)$ to both
proprioceptive states and actions, we adopt Cartesian-space end-effector representations rather than joint-space representations~\cite{zhao2024aloha}.
In addition, actions are parameterized using absolute end-effector poses rather than relative pose increments, following the canonical policy formulation~\cite{zhang2025canonical}.
This choice ensures that the $\mathrm{SE}(3)$ inverse and forward transformations are well defined and physically meaningful.

EquiForm operates as a preprocessing framework on input observations, mapping $\mathrm{SE}(3)$-related point cloud observations into a shared canonical space.
As a result, the subsequent policy action head is not tied to a specific implementation and can be flexibly instantiated using different generative or regression-based models, such as diffusion models or flow matching.
In this work, we adopt the diffusion model used in Diffusion Policy to predict actions from canonicalized observations, ensuring a fair comparison with DP3 and Canonical Policy.

The implementation details of EquiForm follow the settings described in
Sections~\ref{sec:geometric_denoising} and~\ref{sec:contrastive}.
In the geometric denoising module, the neighborhood size used to compute local mean estimates is set to 16 points.
For equivariant contrastive learning, point cloud coordinates are normalized to the range $[-1, 1]$, and stochastic geometric augmentations are applied, including Gaussian noise sampled from $\mathcal{N}(0, 0.1)$, as well as random point dropout, point insertion, and cropping, each affecting 10\% of the points.
The temperature parameter $\tau$ in Equation~\eqref{eq:infoNCE} is set to 0.1.

Furthermore, to reduce training time and memory consumption, all point clouds are downsampled to 256 points using farthest point sampling (FPS)~\cite{qi2017pointnet}. This reduction does not affect final task performance, consistent with prior
findings~\cite{zhang2025canonical}.
All remaining network architectures and training hyperparameters follow the settings used in the Canonical Policy implementation.
Finally, across all tasks, point cloud observations consist of spatial
coordinates only, without any color information.
This design choice follows prior work~\cite{DP3,zhang2025canonical} and promotes generalization across variations in object appearance and visual texture.

\subsection{Benchmarking with Point Cloud Policies}
\begin{figure*}[t]
    \centering
    \includegraphics[width=0.9\linewidth]{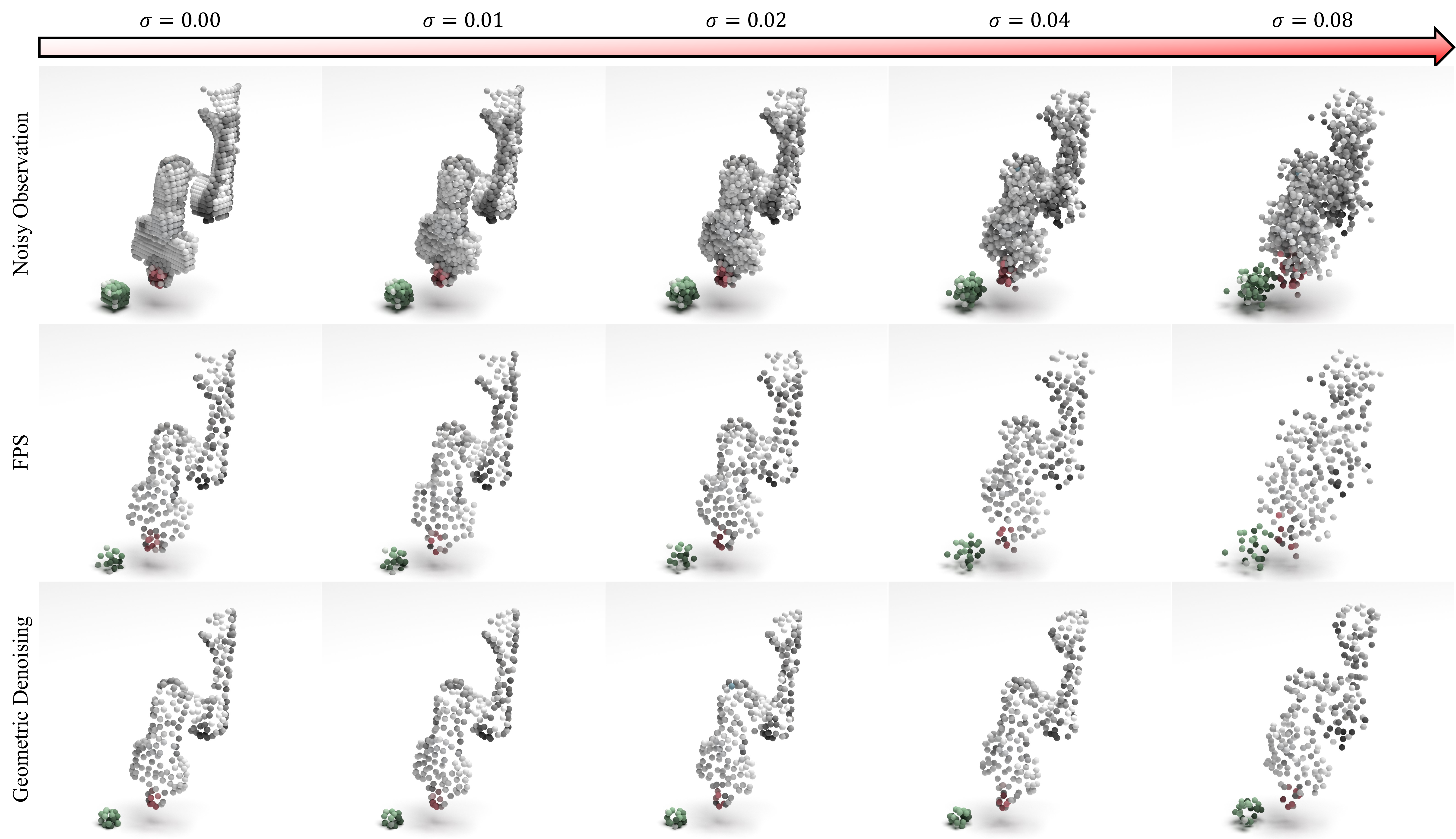}
    \caption{Qualitative visualization of geometric denoising under increasing Gaussian noise. Point cloud observations are corrupted with isotropic Gaussian noise of increasing standard deviation $\sigma$ (left to right). We compare the noisy input, farthest point sampling (FPS), and the proposed geometric denoising. Geometric denoising preserves surface structure and spatial consistency under severe noise, whereas FPS alone fails to recover coherent geometry.
}
    \label{fig:noise_vis}
\end{figure*}

\begin{figure*}[t]
    \centering
    \includegraphics[width=0.9\linewidth]{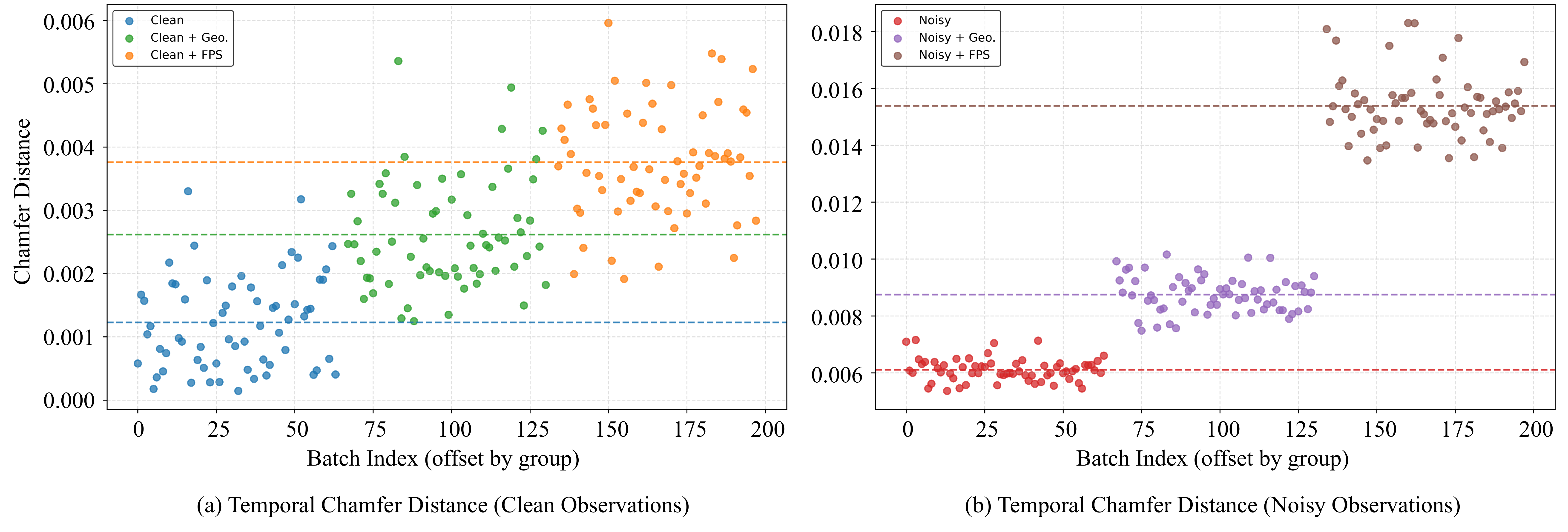}
    \caption{Temporal geometric consistency measured by frame-to-frame Chamfer distance. (a) Clean observations ($\sigma=0.00$). (b) Noisy observations ($\sigma=0.08$). For each batch element, the Chamfer distance is computed between two adjacent point cloud frames. Lower values correspond to improved temporal geometric consistency.
    Dashed horizontal lines indicate the mean Chamfer distance for each group.}
    \label{fig:noise_scatter}
\end{figure*}

\begin{figure*}[t]
    \centering
    \includegraphics[width=0.9\linewidth]{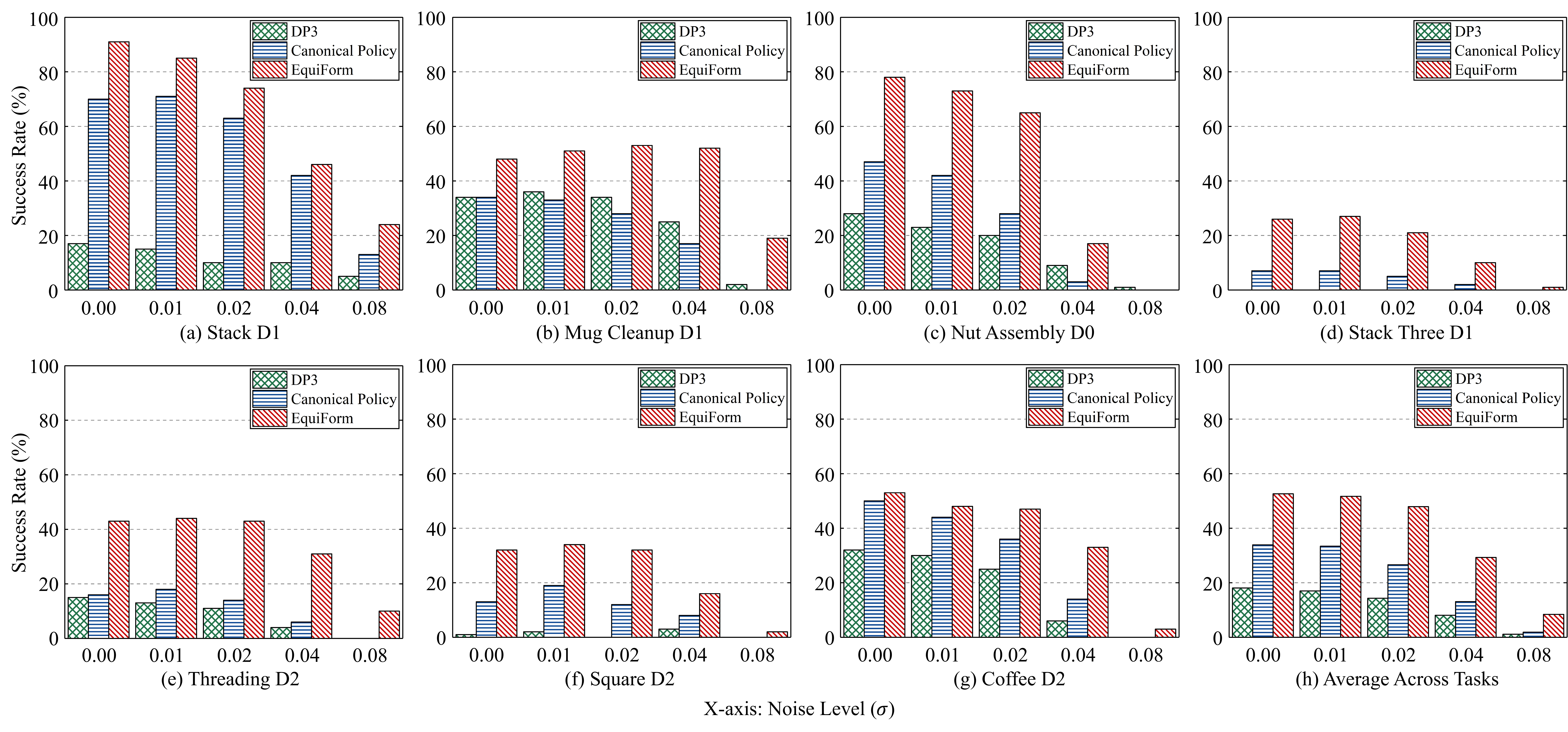}
    \caption{Robustness evaluation on challenging MimicGen tasks under increasing Gaussian observation noise. Subfigures (a)–(g) report task success rates on selected challenging benchmarks, while (h) shows the average success rate across all tasks. Gaussian noise with standard deviation $\sigma$ is added isotropically to point cloud observations.}
    \label{fig:noise_bar}
\end{figure*}

\begin{figure*}[t]
    \centering
    \includegraphics[width=0.9\linewidth]{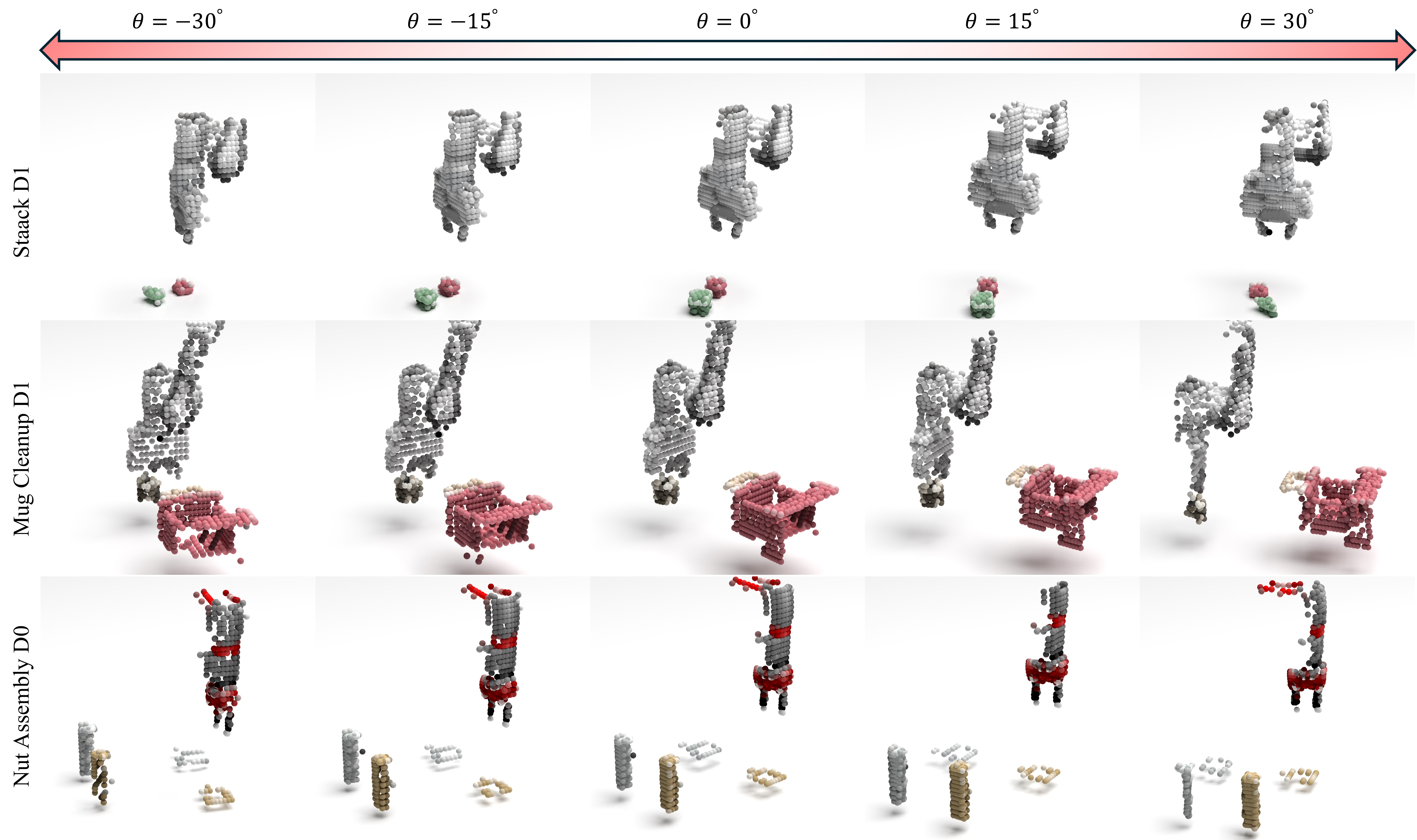}
    \caption{Qualitative visualization of point cloud observations under rigid and partial observation perturbations. Observations are rotated in-plane by angle $\theta$ (left to right) and further affected by random translation and partial visibility. Each row shows a different task, illustrating the impact of rotation on observation geometry.
}
    \label{fig:rot_vis}
\end{figure*}

\begin{figure*}[t]
    \centering
    \includegraphics[width=0.9\linewidth]{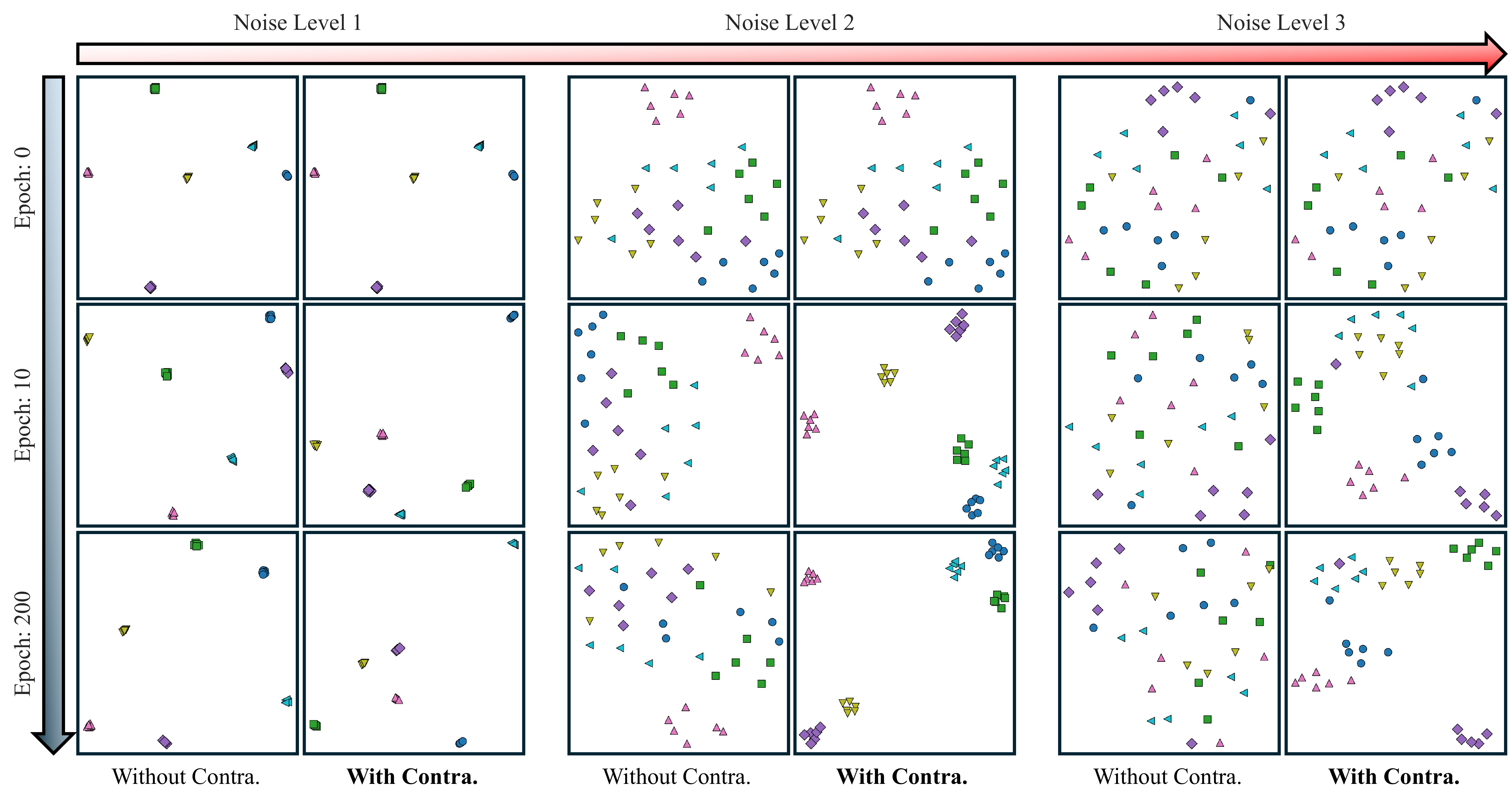}
    \caption{Robustness of equivariant representations under increasing observation noise. Equivariant feature embeddings are visualized across training epochs and noise levels, comparing models trained with and without contrastive learning.}
    \label{fig:vis_feat}
\end{figure*}

\begin{figure*}[t]
    \centering
    \includegraphics[width=0.9\linewidth]{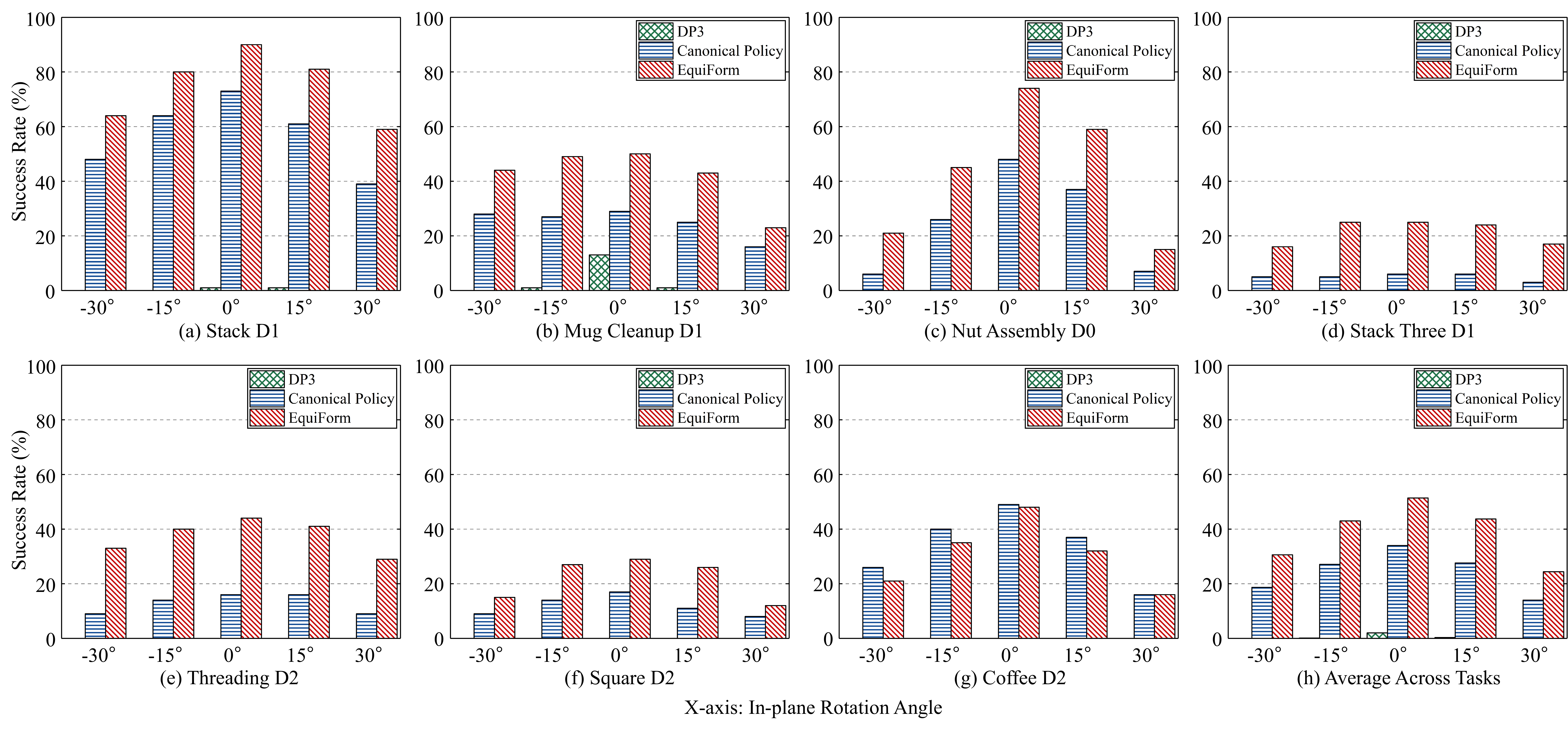}
    \caption{Robustness under rigid observation perturbations on challenging MimicGen tasks. Success rates are reported as a function of the in-plane rotation angle applied to point cloud observations. In addition to rotation, point clouds are randomly translated and subjected to partial observation effects, including cropping, dropout, and point insertion. Subfigures (a)–(g) show per-task performance, while (h) reports the average across tasks.}
    \label{fig:rot_bar}
\end{figure*}

\begin{table*}[t]
\caption{Ablation study on geometric denoising (Geo.) and contrastive learning (Contra.). Task success rates (\%).}
\label{tab:ablation}
\centering
\renewcommand{\arraystretch}{1.15}
\setlength{\tabcolsep}{5pt}

\newcommand{\tblwidth}{0.915\textwidth}

\begin{tabularx}{\tblwidth}{cccccccc}
\toprule
Geo. & Contra. &
\textbf{Stack D1} & \textbf{Mug Cleanup D1} & \textbf{Nut Assembly D0} &
\textbf{Stack Three D1} & \textbf{Threading D2} & \textbf{Square D2} \\
\midrule
$\times$ & $\times$ & 76 & 44 & 38 & 10 & 15 & 14 \\
$\checkmark$ & $\times$ & 94 & 48 & 75 & 23 & 37 & \textbf{41} \\
$\times$ & $\checkmark$ & 75 & 39 & 61 & 3 & 31 & 24 \\
\rowcolor{gray!30}
$\checkmark$ & $\checkmark$ & \textbf{96} & \textbf{50} & \textbf{76} & \textbf{25} & \textbf{42} & 33 \\

\midrule
Geo. & Contra. &
\textbf{Coffee D2} & \textbf{PushT} & \textbf{Beat Block Ham.} &
\textbf{Click Alarmclock} & \textbf{Handover Block} & \textbf{Move Can Pot} \\
\midrule
$\times$ & $\times$ & 52 & 87 & 93 & 65 & 68 & 84 \\
$\checkmark$ & $\times$ & 52 & -- & 93 & 86 & 95 & 89 \\
$\times$ & $\checkmark$ & 45 & 96 & \textbf{98} & 88 & 39 & 89 \\
\rowcolor{gray!30}
$\checkmark$ & $\checkmark$ & \textbf{53} & \textbf{96} & 87 & \textbf{89} & 85 & \textbf{90} \\

\midrule
Geo. & Contra. &
\textbf{Pick Dual Bottles} & \textbf{Place Empty Cup} &
\textbf{Stamp Seal} & \textbf{Turn Switch} &
\multicolumn{2}{c}{\textbf{Average Success Rate}} \\
\midrule
$\times$ & $\times$ & \textbf{96} & 81 & 34 & 46 & \multicolumn{2}{c}{56.4} \\
$\checkmark$ & $\times$ & 85 & 85 & \textbf{50} & 36 & \multicolumn{2}{c}{65.9} \\
$\times$ & $\checkmark$ & 88 & \textbf{90} & 9 & 40 & \multicolumn{2}{c}{57.2} \\
\rowcolor{gray!30}
$\checkmark$ & $\checkmark$ & 86 & 76 & 43 & \textbf{49} & \multicolumn{2}{c}{\textbf{67.3}} \\

\bottomrule
\end{tabularx}

\vspace{0.3em}
\footnotesize
\parbox{0.93\linewidth}{
We report the mean success rate across 16 simulation benchmarks.
“Geo.” denotes the proposed geometric denoising module, while “Contra.” denotes the use of contrastive representation learning.
Bold numbers indicate the best performance for each task.
}
\end{table*}
Table~\ref{tab:sim_comparison} summarizes the performance of different policies across 16 simulated manipulation tasks.
For the MimicGen tasks, we generate 200 demonstrations~\cite{equidiff} and train each method for 250 epochs.
For the Push-T task, which also contains 200 demonstrations~\cite{diffusion_policy}, each method is trained for 1000 epochs.
For the RoboTwin tasks, all policies are trained on the Aloha-AgileX embodiment using 50 demonstrations per task and trained for 200 epochs, consistent with the original RoboTwin setup~\cite{chen2025robotwin}.

For the MimicGen and Push-T tasks, all experiments are conducted using three random environment seeds (42, 43, and 44).
For each seed, policies are evaluated over the final 10 training epochs, and each evaluation rollout is conducted across 50 different environment
initializations, with results averaged accordingly.
For the RoboTwin tasks, we adopt the evaluation protocol from the original RoboTwin work~\cite{chen2025robotwin}, where each policy is evaluated over 100 independent trials under the demo\_clean setting with seed 0.
The mean success rate reported in Table~\ref{tab:sim_comparison} is computed by averaging results across all evaluation runs, and the best performance for each task is highlighted in bold.

We now analyze the results in Table~\ref{tab:sim_comparison} across different task families, focusing on how task structure and observation characteristics influence policy performance.

RoboTwin comprises bimanual manipulation tasks with relatively structured environments and constrained object configurations. In this setting, EquiForm demonstrates competitive performance, achieving the best results on the majority of tasks. On average, EquiForm improves over the non-equivariant baseline DP3 by 15.8\%, and over the $\mathrm{SE}(3)$-equivariant Canonical Policy by 4.7\%. However, Canonical Policy outperforms EquiForm on a small subset of tasks, including Beat Block Hammer, Pick Dual Bottles, and Place Empty Cup. These tasks exhibit limited global rigid-body pose variation across episodes and instead emphasize local symmetries, tool-use interactions, or coordinated bimanual motion. In such cases, the geometric assumptions underlying Canonical Policy are already well aligned with the task structure, reducing the benefit of additional robust canonicalization.

For the 3D Push-T task, observations consist of only nine sparse keypoints, and geometric denoising is not applicable. We therefore apply contrastive learning to improve observation canonicalization. Under this setting, EquiForm outperforms DP3 by 22.0\% and Canonical Policy by 4.0\%, indicating that robust canonicalization remains effective even under highly sparse and abstract observations.

In contrast, EquiForm exhibits its most pronounced performance gains on the single-arm MimicGen benchmark, which features substantial pose variation, cluttered scenes, and noisy point cloud observations. Averaged across all MimicGen tasks, EquiForm outperforms DP3 by 34.5\% and exceeds Canonical Policy by 17.0\%. In these settings, pose diversity and observation noise significantly hinder policy learning. By canonicalizing observations into a shared reference frame and incorporating noise-robust learning, EquiForm effectively reduces pose-induced variability and stabilizes representations under partial and noisy observations, enabling the policy to focus on task-relevant geometry.

When averaged across all evaluated tasks, EquiForm achieves an overall success rate of 66.6\%, representing the highest average performance among all methods. This corresponds to a 10.0\% improvement over Canonical Policy and a 24.3\% improvement over DP3, highlighting the effectiveness of EquiForm in generalizing across task families with varying degrees of pose variability and observation noise.

\subsection{Robustness under Noisy Inputs}

While the above results demonstrate strong task-level performance,
they do not isolate the effect of observation noise on geometric
consistency.
In practice, point cloud observations are often corrupted by sensor
noise and sampling variability, which can lead to frame-to-frame
inconsistency and destabilize policy execution.
To explicitly evaluate the noise robustness of EquiForm and validate the
design of the geometric denoising module, we conduct a controlled study
by injecting isotropic Gaussian noise into point cloud observations.

Fig.~\ref{fig:noise_vis} provides a qualitative visualization of the
effect of increasing Gaussian noise on point cloud observations.
As the noise level increases from left to right, raw noisy observations
exhibit severe surface distortion and loss of coherent geometric
structure.
FPS, while effective for spatial subsampling, fails to recover meaningful surface geometry under strong noise, leading to fragmented and inconsistent point distributions.
In contrast, the proposed geometric denoising module preserves surface
continuity and spatial coherence even under severe noise.
This qualitative result highlights that geometric denoising addresses
noise-induced surface corruption at the geometry level, rather than
relying on task-specific heuristics or policy-level adaptation.

Fig.~\ref{fig:noise_scatter} reports the temporal Chamfer distance~\cite{chamfer} between
adjacent point cloud frames under both clean and noisy observation
settings.
In our simulation setup, point cloud observations are sampled at 10\,Hz,
and the robot motion between consecutive time steps is small.
As a result, adjacent frames are expected to exhibit high geometric
similarity.
Fig.~\ref{fig:noise_scatter}(a) corresponds to the clean observation setting ($\sigma=0.0$).
Under this setting, raw point cloud observations achieve the highest
geometric similarity.
However, FPS reduces temporal consistency
due to inconsistent sampling across frames.
In contrast, the proposed geometric denoising module maintains higher
frame-to-frame similarity by enforcing more uniform point distributions
through the tangent-direction correction, resulting in improved sampling consistency over time.
Fig.~\ref{fig:noise_scatter}(b) shows the noisy observation setting ($\sigma=0.08$).
Under strong noise, temporal inconsistency becomes substantially more
pronounced.
While FPS further amplifies frame-to-frame discrepancy, the geometric denoising module preserves higher temporal
similarity.
This robustness arises from the combined effects of the normal-direction correction, which constrains points to the underlying surface, and the tangent-direction correction, which promotes uniform sampling across frames.
These results demonstrate that geometric denoising effectively mitigates sampling inconsistency under noisy observations, directly validating the design motivation of the proposed module.

Finally, we examine how improved geometric stability under noise
translates into downstream task performance.
Fig.~\ref{fig:noise_bar} reports task success rates on selected
challenging MimicGen tasks as well as the average performance across all
tasks, under increasing levels of Gaussian observation noise.
While baseline methods experience rapid performance degradation as noise
intensifies, EquiForm maintains consistently higher success rates across all
noise levels.
This trend is especially pronounced on tasks with lower baseline success
rates, where observation noise and pose variation more severely hinder
policy learning.

Taken together, these results demonstrate that the geometric denoising
module effectively stabilizes point cloud observations under noisy inputs,
leading to improved temporal consistency and, ultimately, more robust
policy execution.

\subsection{Robustness under Rigid Transformations}

In addition to observation noise, point cloud observations are often subject to rigid transformations caused by viewpoint or layout changes, which are frequently accompanied by partial observability and uneven point distributions, thereby severely degrading equivariant representations.
Fig.~\ref{fig:rot_vis} provides a qualitative visualization of point cloud
observations under increasing in-plane rotations.
Observations are rotated by angle $\theta$ and further perturbed by random
translation and partial visibility, including cropping, point dropout,
and point insertion.
The severity of occlusion increases with the magnitude of the rotation,
reflecting the fact that larger viewpoint changes typically induce greater
partial observability.

We design an experiment to evaluate the robustness of the equivariant feature representations introduced in Section~\ref{sec:contrastive} under rigid observation perturbations.
Specifically, we sample six point cloud observations from the Stack~D1
task and apply six augmentations per sample under three increasing
perturbation levels:
\begin{itemize}
    \item \textbf{Level 1}: random in-plane rotation only;
    \item \textbf{Level 2}: random rotation combined with pointwise Gaussian
    jitter ($\mu=0,\;\sigma=0.1$) and random cropping, dropout, and insertion,
    where each operation affects 10\% of the points;
    \item \textbf{Level 3}: same with $\sigma=0.2$ and 20\% perturbations.
\end{itemize}

We compare equivariant feature embeddings produced by models trained
\textit{with} and \textit{without} equivariant contrastive learning.
UMAP~\cite{umap} is used to project the resulting equivariant features
$\mathbf{z}$ into a two-dimensional space for visualization.

As shown in Fig.~\ref{fig:vis_feat}, at Noise Level~1, where only rigid
rotations are applied, equivariant features corresponding to the same
point cloud instance cluster tightly for both settings, reflecting the
inherent equivariance of the network.
At Noise Level~2, where rotation is accompanied by moderate noise and
partial observation, features produced without contrastive learning
become increasingly scattered, indicating that equivariance alone is
insufficient to ensure robustness under coupled perturbations.
In contrast, models trained with contrastive learning preserve compact
and coherent feature clusters, with intra-instance consistency improving
progressively over training epochs.
At Noise Level~3, corresponding to strong noise and severe partial
observations, the equivariant structure of features without contrastive
learning largely collapses, whereas contrastive learning still enables
the network to maintain structured and distinguishable clusters.

We further evaluate the impact of rigid observation perturbations on
downstream task performance. Fig.~\ref{fig:rot_bar} reports success rates on representative MimicGen tasks as a function of the in-plane rotation angle $\theta$ applied to point cloud observations.
In addition to rotation, point clouds are also subjected to random
translation and partial observation effects, including cropping, dropout,
and point insertion, to reflect realistic sensing conditions.
Notably, DP3 suffers a near-complete performance collapse even at
$\theta=0$, where observations are only translated but not rotated.
This behavior highlights the lack of any built-in equivariance in DP3,
making it highly sensitive to rigid transformations.
In contrast, both the Canonical Policy and EquiForm maintain relatively
stable performance under rigid perturbations, as they explicitly model
equivariance with respect to rigid transformations.
As the rotation magnitude increases and partial observations become more severe, the performance of both methods degrades across tasks.
However, EquiForm achieves higher success rates than the Canonical Policy across a wide range of rotation angles.
This improvement can be attributed to equivariant contrastive learning,
which stabilizes equivariant features under coupled rigid transformations
and partial observability, resulting in a more reliable canonical
representation.

Overall, these results suggest that equivariance alone is insufficient to
ensure robustness under realistic rigid perturbations.
By combining geometric denoising with equivariant contrastive learning,
EquiForm more reliably canonicalizes approximately rigid and partially
observed point clouds, thereby maintaining stable policy execution under
viewpoint and layout changes.

\subsection{Ablation Study}
We conduct an ablation study in Table~\ref{tab:ablation} to evaluate the individual and combined effects of geometric denoising (Geo.) and equivariant contrastive learning (Contra.) on policy performance.
Overall, enabling either geometric denoising or equivariant contrastive learning improves performance on a subset of tasks, indicating that both components provide useful but distinct contributions.
When both components are combined, the full EquiForm model achieves the highest average success rate of 67.3\% across all benchmarks, demonstrating that the two modules are complementary at an aggregate level.

We note, however, that the combined model does not achieve the best performance on every individual task.
In particular, for tasks such as Pick Dual Bottles, the variant without geometric denoising and contrastive learning attains higher success rates.
These tasks exhibit relatively constrained scene layouts and limited pose variation, where strong canonicalization or geometric smoothing may suppress fine-grained, task-relevant spatial details.

In contrast, tasks involving larger pose variability, partial observations, or cluttered interactions, such as Stack, Nut Assembly, and Threading tasks in MimicGen, benefit substantially from the joint use of geometric denoising and equivariant contrastive learning.
In such settings, the two components jointly stabilize the canonical representation under observation noise and rigid transformations, leading to more robust downstream policy performance.

Taken together, these results suggest that geometric denoising and equivariant contrastive learning contribute complementary benefits rather than redundant functionality, and that their effectiveness is task-dependent, with the largest gains arising in scenarios where pose variation and observation noise pose significant challenges.

\section{Real-Robot Experiments}
In this section, we evaluate the proposed EquiForm policy on real-world
Franka Emika Panda robotic platforms.
We describe the experimental setup and benchmark EquiForm against
point cloud--based baselines on a set of diverse manipulation tasks.
We further assess robustness and generalization under scene-level
$\mathrm{SE}(3)$ layout variations, followed by an analysis of
representative failure cases.

\subsection{Experimental Setup}
\begin{figure*}[t]
    \centering
    \includegraphics[width=1.0\linewidth]{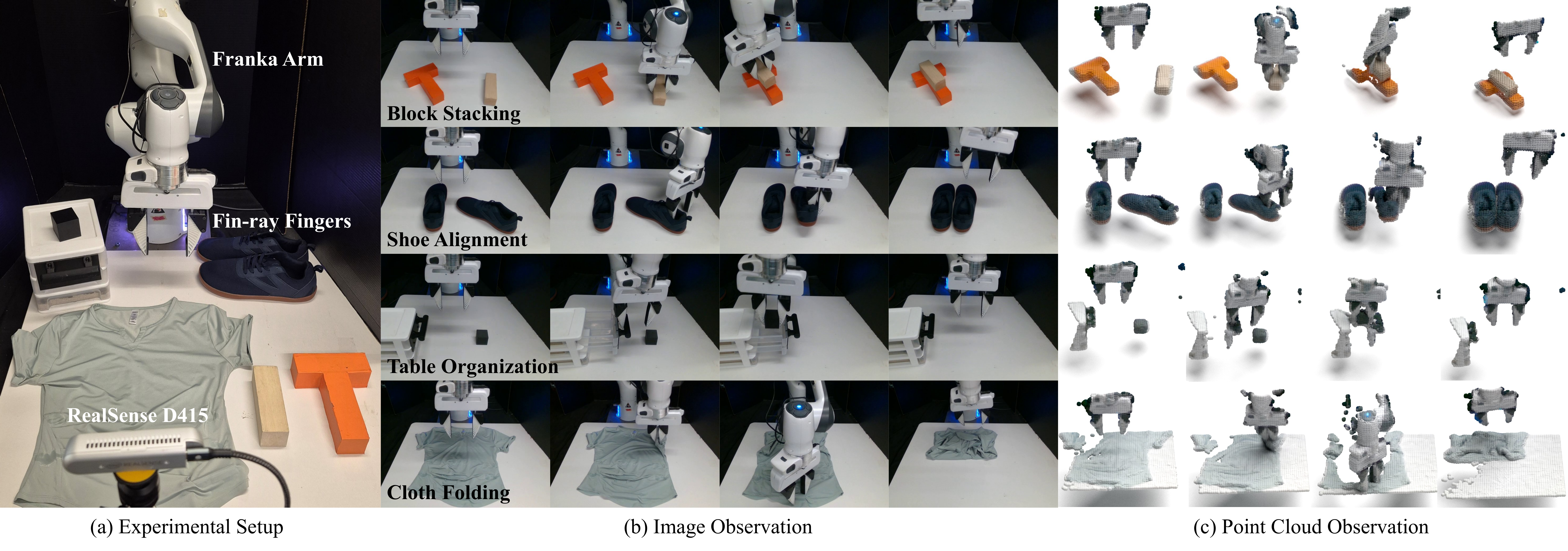}
    \caption{Experimental setup and visual observations. (a) The real-world robotic manipulation setup, consisting of a Franka arm equipped with Fin-Ray fingers and an Intel RealSense D415 camera. (b) Image observations captured over time for representative manipulation tasks, including block stacking, shoe alignment, table organization, and cloth folding. (c) Corresponding point cloud observations reconstructed from RGB-D images.}
    \label{fig:realworld}
\end{figure*}
\begin{table}[t]
\caption{Task success rates (\%) across different policies and tasks under standard rollout settings.}
\label{tab:realworld_normal}
\centering
\renewcommand{\arraystretch}{1.1}
\setlength{\tabcolsep}{2pt}

\newcolumntype{C}{>{\centering\arraybackslash}X}

\begin{tabularx}{\columnwidth}{lCCCC}
\toprule
 & Block Stacking & Shoe Alignment & Table Organization & Cloth Folding \\
\midrule
DP3 & 1/10 & 2/10 & 0/10 & 1/10 \\
Canonical Policy & 3/10 & 3/10 & 1/10 & 0/10 \\
\rowcolor{gray!30}
EquiForm & \textbf{5/10} & \textbf{8/10} & \textbf{2/10} & \textbf{3/10} \\
\bottomrule
\end{tabularx}
\end{table}

\begin{figure}[t]
    \centering
    \includegraphics[width=1.0\linewidth]{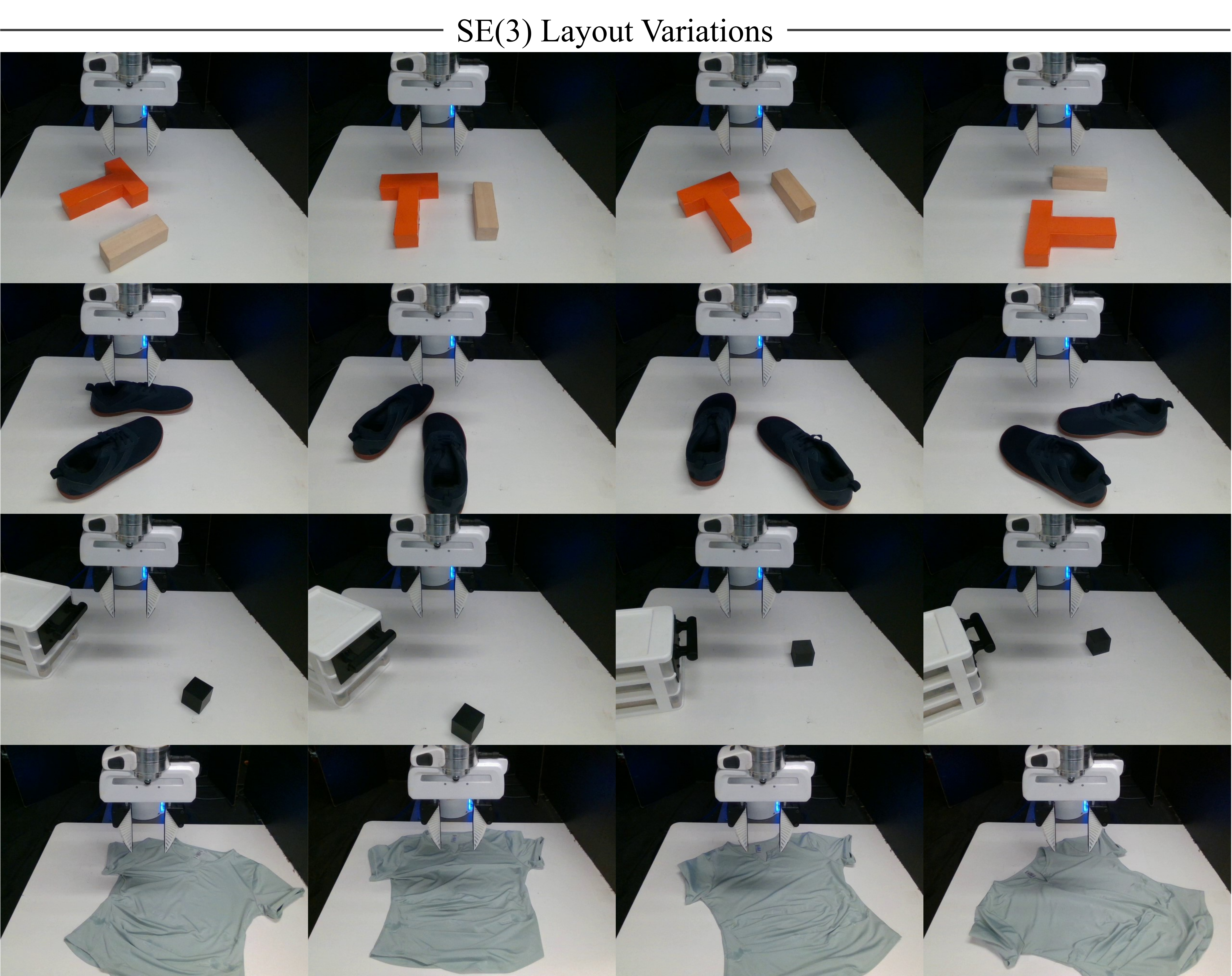}
    \caption{SE(3) layout variations in real-world scenes. We visualize initial observations from real-robot manipulation tasks under diverse scene-level translations and rotations. Each row corresponds to one task, while columns show different randomized scene configurations.}
    \label{fig:se3_layout}
\end{figure}

\begin{figure*}[t]
    \centering
    \includegraphics[width=0.8\linewidth]{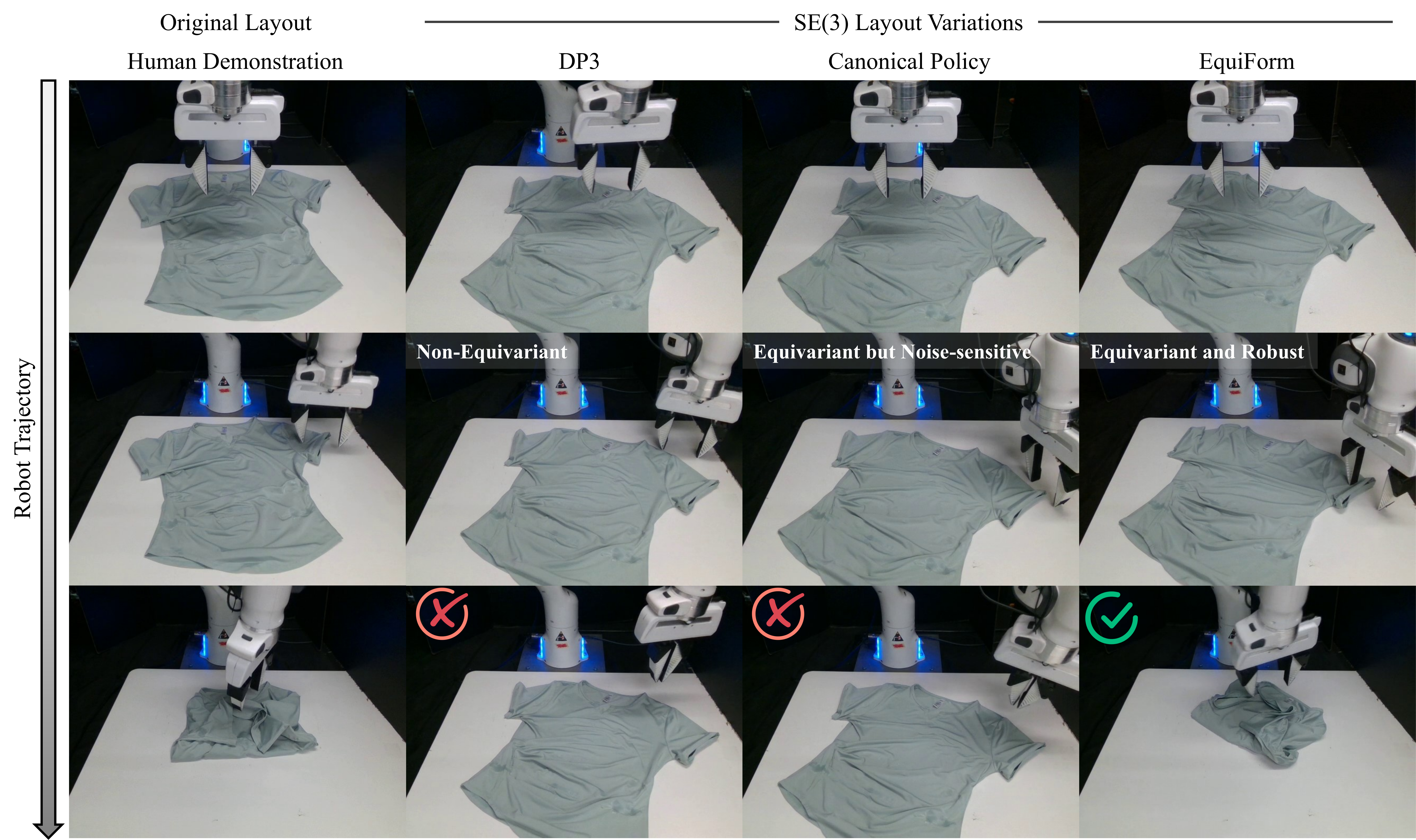}
    \caption{SE(3) layout variations in real-world scenes. The leftmost column shows a human demonstration as a reference, while the remaining columns compare policy rollouts under SE(3) layout variations.}
    \label{fig:se3_layout_vis}
\end{figure*}

\begin{table}[t]
\caption{Task success rates (\%) across different policies and tasks under SE3(3) layout variations.}
\label{tab:realworld_se3}
\centering
\renewcommand{\arraystretch}{1.1}
\setlength{\tabcolsep}{2pt}

\newcolumntype{C}{>{\centering\arraybackslash}X}

\begin{tabularx}{\columnwidth}{lCCCC}
\toprule
 & Block Stacking & Shoe Alignment & Table Organization & Cloth Folding \\
\midrule
DP3 & 1/10 & 3/10 & 0/10 & 0/10 \\
Canonical Policy & 3/10 & 3/10 & 0/10 & 0/10 \\
\rowcolor{gray!30}
EquiForm & \textbf{5/10} & \textbf{6/10} & \textbf{2/10} & \textbf{2/10} \\
\bottomrule
\end{tabularx}
\end{table}
We evaluate the proposed EquiForm policy on a real-world robotic manipulation platform. The experimental setup and representative visual observations are shown in Fig.~\ref{fig:realworld}.
The system consists of a Franka Emika Panda~\cite{panda} arm equipped with a pair of Fin-Ray fingers~\cite{finray} for compliant grasping and an Intel RealSense D415~\cite{realsense} RGB-D camera for visual sensing. Human demonstrations are collected using a 6-DoF 3DConnexion SpaceMouse~\cite{spacemouse}, with observations and actions recorded at 10~Hz. During policy rollout, DDIM~\cite{DDIM} sampling with 20 denoising steps is used for efficient inference.

RGB-D observations captured by the D415 camera are converted into 3D point clouds using calibrated camera intrinsics. Point clouds are spatially cropped to a fixed workspace region and retain only the $\mathit{xyz}$ coordinates, discarding color information to improve robustness to appearance variation. Consistent with the simulation experiments, each point cloud is uniformly downsampled to 256 points for efficient inference. Representative image observations and their corresponding point cloud reconstructions are shown in Fig.~\ref{fig:realworld}(b) and Fig.~\ref{fig:realworld}(c), respectively.

We evaluate four real-world manipulation tasks with increasing geometric and interaction complexity: Block Stacking, Shoe Alignment, Table Organization, and Cloth Folding.
The first two tasks primarily involve rigid objects and short-horizon
interactions, while Table Organization and Cloth Folding require
long-horizon reasoning under cluttered scenes.
In particular, Cloth Folding involves non-rigid deformation and thin
structures that are difficult to observe reliably, leading to increased
observation noise and geometric ambiguity in point cloud representations.
We compare EquiForm against representative point cloud--based baselines,
including DP3 and the Canonical Policy, with all methods operating on
identical point cloud observations and task definitions.

\subsection{Benchmarking on Diverse Tasks}
Table~\ref{tab:realworld_normal} summarizes the quantitative performance of
different point cloud--based policies across four real-world manipulation
tasks on the Franka Emika Panda platform, evaluated over 10 rollout trials
per task. Overall, EquiForm consistently achieves the highest success rates
across all tasks compared to DP3 and the Canonical Policy, demonstrating
improved robustness in real-world settings.

For the relatively short-horizon tasks involving rigid objects, namely
Block Stacking and Shoe Alignment, EquiForm attains success rates of 50\% and
80\%, respectively. These results clearly outperform DP3, which fails in
most trials, and improve upon the Canonical Policy, consistent with more
stable canonicalization under real-world sensing noise and execution
variability.  
On long-horizon and geometrically challenging tasks, including Table
Organization and Cloth Folding, performance differences remain observable.
Both DP3 and the Canonical Policy succeed in at most a single trial under
these settings, indicating limited robustness in real-world scenarios.
In contrast, under the same evaluation protocol, EquiForm completes these
tasks in multiple trials, indicating improved robustness under real-world
conditions.

\subsection{Validation under Rigid Transformations}

We further evaluate robustness under scene-level rigid transformations by
introducing $\mathrm{SE}(3)$ layout variations in real-world environments.
Specifically, we apply random translations and in-plane rotations to the
initial object layouts while keeping the robot and sensing setup unchanged.
Representative examples of these layout variations are shown in
Fig.~\ref{fig:se3_layout}, where each row corresponds to a task and columns
illustrate different randomized scene configurations.

Table~\ref{tab:realworld_se3} reports task success rates under these layout
variations. For Block Stacking and Shoe Alignment, both DP3 and the
Canonical Policy retain limited performance, while EquiForm achieves higher
success rates across both tasks. On the more challenging Table
Organization and Cloth Folding tasks, DP3 and the Canonical Policy fail in
all trials. In contrast, EquiForm continues to achieve successful task
executions in multiple trials, exhibiting a performance trend consistent
with that observed under the standard rollout setting in Table~\ref{tab:realworld_normal}.

Fig.~\ref{fig:se3_layout_vis} presents qualitative rollouts under $\mathrm{SE}(3)$ layout variations for the Cloth Folding task, which is particularly challenging due to non-rigid deformation and noisy, irregular point cloud geometry.
These characteristics make the task highly sensitive to layout changes and sampling instability, providing a representative testbed for
evaluating robustness under coupled rigid transformations and observation
noise.

As shown in Fig.~\ref{fig:se3_layout_vis}, DP3, which lacks equivariant
structure, follows a fixed trajectory pattern learned from demonstrations.
When the scene is transformed, this behavior results in grasps being
executed at incorrect spatial locations.
By contrast, the Canonical Policy exhibits equivariant behavior: when the
cloth is rotated, the grasping trajectory rotates accordingly.
However, due to its sensitivity to noisy point cloud observations, the
estimated grasp locations become inaccurate, leading to failed contacts
and unsuccessful grasps.
In contrast, EquiForm preserves robust equivariant behavior under the same
conditions.
By stabilizing canonical representations through geometric denoising and
equivariant contrastive learning, EquiForm produces grasp trajectories that
transform consistently with the scene while remaining accurate under noisy
observations.

\subsection{Failure Case Analysis}
\begin{figure*}[t]
    \centering
    \includegraphics[width=0.8\linewidth]{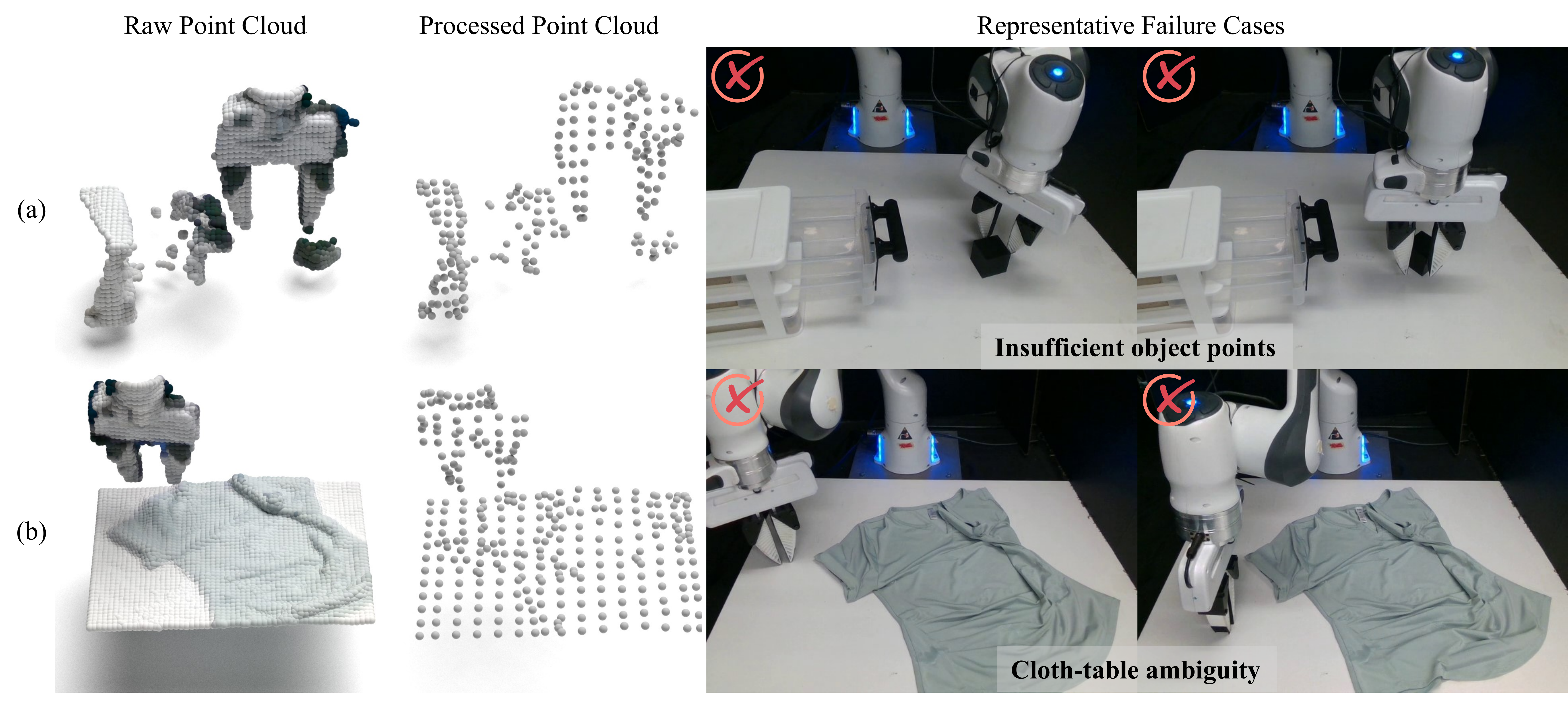}
    \caption{Representative failure cases in real-world experiments. Left: raw RGB-D point clouds and the corresponding processed point clouds after downsampling and color removal. Right: example failure cases in table organization and cloth folding. In table organization, small objects are under-represented after point cloud processing, resulting in insufficient geometric support for reliable grasping. In cloth folding, the thin structure of the cloth leads to ambiguity between the cloth and the supporting table in geometry-only point clouds.}
    \label{fig:failure_case}
\end{figure*}
We analyze representative failure cases to better understand the remaining
limitations of EquiForm in real-world settings.
Fig.~\ref{fig:failure_case} visualizes spatially cropped raw point clouds
alongside their processed representations after color removal and
downsampling, together with corresponding failure examples observed during
task execution.

In the Table Organization task, small objects are often under-represented
after point cloud processing, resulting in insufficient geometric support
for reliable grasp planning and execution.
As illustrated in Fig.~\ref{fig:failure_case}(a), uniform downsampling can
remove critical object details, which in turn leads to failed grasps or
incorrect object interactions.
In the Cloth Folding task, shown in Fig.~\ref{fig:failure_case}(b), the thin
and deformable structure of the cloth introduces significant ambiguity
between the cloth surface and the supporting table when using
geometry-only point cloud observations.
This ambiguity can result in inaccurate grasp localization and subsequent
failures during folding execution.

These failure modes highlight inherent challenges of point cloud–based
perception in real-world manipulation, particularly for tasks involving
small objects or thin, non-rigid materials.
Addressing these limitations may require more adaptive point sampling
strategies that better preserve object-level geometry, as well as the
integration of complementary sensing modalities, such as tactile feedback,
to improve perception of thin and deformable objects.

\section{Conclusions and Limitations}
In this work, we proposed EquiForm, a noise-robust $\mathrm{SE}(3)$-equivariant
policy framework for point cloud--based robotic manipulation.
EquiForm integrates geometric denoising and equivariant contrastive learning
to stabilize canonical representations under observation noise and rigid
layout variations.
Through extensive simulation benchmarks and real-world robotic
experiments, we demonstrated that EquiForm improves the reliability of policy
execution across a diverse set of manipulation tasks, particularly in
settings involving noisy point cloud observations and scene-level
$\mathrm{SE}(3)$ transformations.
Both quantitative results and qualitative analyses show that stabilizing
equivariant representations is critical for maintaining consistent
behavior under realistic sensing and layout perturbations.

Despite these improvements, EquiForm has several limitations.
First, uniform point cloud downsampling can under-represent small objects,
removing fine-grained geometric details that are important for reliable
grasp planning.
Second, for tasks involving thin and non-rigid objects such as Cloth
Folding, geometry-only point cloud observations can introduce ambiguity
between the target object and supporting surfaces, making accurate grasp
localization difficult.
Finally, while EquiForm improves robustness under moderate observation noise
and layout variations, extreme occlusions and severe sensing artifacts
remain challenging.
Addressing these limitations may require more adaptive point sampling
strategies, higher-resolution or object-aware geometric representations,
and the integration of complementary sensing modalities, such as tactile
perception.

\bibliographystyle{IEEEtran}
\bibliography{references}

\vfill

\end{document}